\newcommand{\removed}[1]{}
\pgfplotsset{compat=1.18}
\theoremstyle{plain}
\tikzset{%
mynode/.style={circle,minimum width=.5ex, fill=none,draw}, 
myfillnode/.style={circle,minimum width=.5ex, fill=lightgray,draw}, 
}
\newcommand{\indep}{\perp \!\!\! \perp}
\newtheorem{theorem}{Theorem}
\newtheorem{definition}{Definition}
\newtheorem{assumption}{Assumption}
\newtheorem{lemma}{Lemma}
\newtheorem{proposition}{Proposition}
\def\defeq{\mathrel{\ensurestackMath{\stackon[1pt]{=}{\scriptscriptstyle\Delta}}}}
\pgfplotsset{compat=1.18}
\pgfplotsset{compat=1.18}
\newcommand{\jin}[1]{\textcolor{blue}{[[#1]]}}
\newcommand{\jina}[1]{\textcolor{blue}{#1}}
\newcommand{\yuta}[1]{\textcolor{red}{#1}}
\title{Potential Outcome Rankings for Counterfactual Decision Making}
\author{
Yuta Kawakami, Jin Tian
}
\begin{document}

\maketitle

\begin{abstract}
Counterfactual decision-making in the face of uncertainty involves selecting the optimal action from several alternatives using causal reasoning. Decision-makers often rank expected potential outcomes (or their corresponding utility and desirability) to compare the preferences of candidate actions. In this paper, we study new counterfactual decision-making rules by introducing two new metrics: the probabilities of potential outcome ranking (PoR) and the probability of achieving the best potential outcome (PoB). PoR reveals the most probable ranking of potential outcomes for an individual, and PoB indicates the action most likely to yield the top-ranked outcome for an individual. We then establish identification theorems and derive bounds for these metrics, and present estimation methods. Finally, we perform numerical experiments to illustrate the finite-sample properties of the estimators and demonstrate their application to a real-world dataset.
\end{abstract}



\section{Introduction}

Decision-making in the face of uncertainty involves selecting the optimal action from several alternatives $\{1,\dots,K\}$ using logical reasoning or empirical evidence.
Decision theory traces its origins to the work of Blaise Pascal, particularly Pascal’s Wager and his foundational contributions to probability theory \citep{Pascal1670}.
This line of reasoning aligns with the principles of expected utility theory, which later became a cornerstone of modern decision theory \citep{Bernoulli1954, Jeffrey1990, Knight2002}.
Today, AI is reshaping decision-making processes across a wide range of domains—from healthcare and finance to business strategy and everyday personal choices \citep{Lai2023}.
Causal inference, on the other hand, examines the cause-and-effect relationship between an action (or decision) ($X$) and its outcome (or corresponding utility, desirability) ($Y$) with potential outcomes (POs) $Y_x$ \citep{Pearl09}. 
When these two areas intersect, counterfactual decision-making emerges as a method for determining optimal action by integrating causal reasoning techniques \citep{Lewis1976,Gibbard1981,Pearl09,Dawid2021}.

Using causal reasoning, decision-makers often focus on evaluating \emph{the rank of the expected POs} (RoE),
given by the following counterfactual relationship:
\begin{equation}
\label{eq1}
\mathbb{E}[Y_{r_1}] > \mathbb{E}[Y_{r_2}] > \dots > \mathbb{E}[Y_{r_K}].
\end{equation}
Causal reasoning enables us to assess the expected POs, when they cannot be directly observed or evaluated. Based on the RoE in (\ref{eq1}), decision-makers judge $r_1$ as the best action and $r_K$ as the least favorable action. 
Many approaches rely on RoE-based decision-making, including methods in statistics \citep{STUDENT1908,Bartholomew1959, Page1963, McCaffrey2013}, in causal inference \citep{Lopez2017}, and reinforcement learning 
\citep{Sutton1998,Auer2002}.

Despite the widespread use of RoE-based decision-making, the action that maximizes the expected PO does not always maximize the PO for each individual. Similarly,  the action with the lowest expected PO is not necessarily the least favorable choice for an individual.
RoE is not a perfect decision-making criterion and may lead to undesirable choices, depending on the preferences of the decision-maker. 
To accommodate a broader range of decision-making preferences, alternative rules are desired.

In economics, discrete choice models (DCMs) offer an alternative framework for modeling \emph{individual} choice behavior \citep{Mcfadden1972, Ben1985, Anderson1992, Mcfadden2001, Train2009}.
{DCMs are used to describe and predict the probability that an individual selects a particular alternative from available options.  
The utility $Y(k)$  that a person obtains by choosing the alternative $k$  
is  modeled as a function of observable characteristics of the alternatives and the decision-maker, plus an unobservable random component. The core assumption is that individuals are rational and choose the alternative that maximizes their perceived utility. Researchers  typically evaluate the probability that a specific alternative, say $r_1$, is chosen, which can be expressed as:
\begin{equation}
\mathbb{P}\left(\arg\max\nolimits_{k=1,\dots,K}\{Y(k)\}=r_1\right).
\end{equation}
Sometimes the interest lies in the probability of a specific ranking of alternatives $(r_1,\dots,r_K)$, i.e.,
\begin{equation}
\mathbb{P}\left(Y({r_1})> Y({r_2})> \dots> Y({r_K})\right).
\end{equation}
It is important to clarify that $Y(k)$ (utility) in this context does not represent POs in the causal inference sense, and DCMs do not inherently constitute counterfactual reasoning. DCMs aim to model observed choices based on prevailing conditions, 
and to forecast what choices they will make if those conditions persist.  
}

Inspired by the DCM framework, we formulate new types of counterfactual decision-making rules using POs $Y_k$ and study their reasoning using historical experimental data.
{We  ask the following counterfactual questions for individualized decision-making: 
\begin{center}
({\bf Question 1}). ``\emph{What is the most probable rank\\ of POs for an individual?}"
\end{center}
\begin{center}
({\bf Question 2}). 
``\emph{Which action is most likely to yield\\ the best outcome for an individual?}"
\end{center}
To address Question 1, we propose the following \emph{probability of potential outcome ranking} (PoR) metric for ranks of actions $\bm{r}=(r_1,\dots,r_K)$: 
\begin{equation}
S_{PoR}(\bm{r}) \defeq \mathbb{P}\left(Y_{r_1}> Y_{r_2}> \dots> Y_{r_K}\right).
\end{equation}
To address Question 2, we propose the following 
\emph{probability of achieving the best potential outcome} (PoB) metric for actions $r_1\in \{1,\dots,K\}$:
\begin{equation}
S_{PoB}(r_1) \defeq \mathbb{P}\left(\arg\max\nolimits_{k=1,\dots,K}\{Y_k\}=r_1\right).
\end{equation}
Questions 1 and 2 are answered by optimizing the PoR and PoB metrics, respectively.
}

{While the standard RoE metric $S_{RoE}(k) = \mathbb{E}[Y_{k}]$ evaluates an action based on the population average of its POs, our proposed metrics differ fundamentally. The PoB metric is based on the proportion of individuals for whom the action yields the best PO. The PoR metric does not directly evaluate individual actions, but assesses the rankings of all possible actions based on the proportion of individuals exhibiting the corresponding order of POs. PoR allows for the identification of the most probable rank of POs, and one could then select the best action as the one ordered first in this rank.  PoR enables more nuanced decisions, particularly useful when the sequence or relative performance of actions matters, beyond just identifying a single optimal one. 
}

{Evaluating the PoR and PoB metrics poses significantly greater challenges than evaluating RoE, as it requires counterfactual reasoning that lies in the third layer of Pearl’s causal hierarchy \citep{Bareinboim2022,Pearl2018}, whereas RoE reasoning belongs to the second layer. This paper offers three key contributions: we establish identification theorems for PoR and PoB under the widely used rank invariance assumption \citep{Heckman1997}; we derive bounds for these metrics in the absence of this assumption;  and we present corresponding estimation methods.} 
Finally, we conduct numerical experiments to demonstrate the 
properties of the estimators and present an application using a real-world dataset.


\section{Illustrating Example}

We provide an illustrative example that highlights the differences between the three metrics RoE, PoR, and PoB. 

\textbf{Educational Scenario.}
In educational research, the causal effect of education plays a critical role practically. 
Specifically, imagine three classes (A, B, and C) designed to enhance students’ overall knowledge proficiency. 
Decision-makers, such as students, teachers, or parents, need to choose one course from among them.
Table \ref{tab:my_label} presents an example of the students’ potential scores.
Traditionally, decision-makers use the RoE metric and compare the expected scores across the available classes. 
Class A exhibits the highest expected score, followed by Class B. 
Based on this comparison, one concludes that Class A is the most beneficial, Class B is a bit less effective, and Class C is the least effective. 
However, 
one may reasonably conclude that the optimal ranking is Class C first, followed by Class B, and Class A last, based on the observation that this is the most probable rank (37.5\% chance). 
This conclusion corresponds to the PoR metric. 
Finally, some other decision-makers might prefer to enroll in Class B based on the PoB metric,  since B is  most likely to yield the highest score (in 50\% of the students). 


\begin{table}[tb]
    \centering
    \scalebox{0.8}{
    \begin{tabular}{c|ccc|c}
    \hline
      Student ID &  A &  B & C & Rankings\\
       \hline
       1  &  30 &40& \textbf{50} &A $<$ B $<$ C \\
       2 & 40 & 50 &\textbf{60} &A $<$ B $<$ C \\
       3& 50 & 60 & \textbf{70} & A $<$ B $<$ C \\
       4  &  30 &\textbf{50}& 40 &A $<$ C $<$ B \\
       5 & 40 & \textbf{60} & 50 &A $<$ C $<$ B \\
       6& 60 & \textbf{70} & 40 & C $<$ A $<$ B \\
       7& 50 & \textbf{60} & 40 & C $<$ A $<$ B \\
       8& \textbf{100} & 5 & 0 & C $<$ B $<$ A\\
       \hline
      Average of potential score & \textbf{50} & 49.375 & 43.75 & -\\
      Proportion of being the best score & 0.125 & \textbf{0.5} & 0.375 & - \\
\hline
\end{tabular}
}
\caption{An example of students' potential scores. We highlight the maximum value in each row.}
\label{tab:my_label}
\end{table}


As illustrated in this example, our PoB and PoR criteria yield different decision-making rules, potentially leading decision-makers to choose different actions.
Similar situations arise in medical contexts (such as selecting a type of surgery to improve patient life expectancy), in economic contexts (such as choosing a policy to improve economic performance in each municipality), and in marketing contexts (such as selecting an advertising strategy to influence the attitudes of customers).

\section{Notations and Backgrounds}

We represent each variable with a capital letter $(X)$ and its realized value with a small letter $(x)$.
Let $\mathbb{I}(x)$ be an indicator function that takes $1$ if $x$ is true and $0$ if $x$ is false.
Denote $\Omega_Y$ be the domain of $Y$,
$\mathbb{E}[Y]$ be the expectation of $Y$, and $F_Y(y)=\mathbb{P}(Y\leq y)$ be the cumulative distribution function (CDF) of a continuous variable $Y$. 
The inverse of the CDF is defined as $F^{-1}_Y(p)=\inf\{y;F_Y(y)\geq p\}$ for $p \in [0,1]$.

\textbf{Structural causal models (SCM).} 
We use the language of SCMs as our basic semantic and inferential framework \citep{Pearl09}.
An SCM ${\cal M}$ is a tuple $<{\boldsymbol U},{\boldsymbol V}, {\cal F}, \mathbb{P}_{\boldsymbol U}>$, where ${\boldsymbol U}$ is a set of exogenous (unobserved) variables following a joint distribution $\mathbb{P}_{\boldsymbol U}$, and ${\boldsymbol V}$ is a set of endogenous (observable) variables whose values are determined by structural functions ${\cal F}=\{f_{V_i}\}_{V_i \in {\boldsymbol V}}$ such that $v_i:= f_{V_i}({\mathbf{pa}}_{V_i},{\boldsymbol u}_{V_i})$ where ${\mathbf{PA}}_{V_i} \subseteq {\boldsymbol V}$ and $U_{V_i} \subseteq {\boldsymbol U}$. 
An atomic intervention of setting a set of endogenous variables ${\boldsymbol X}$ to constants ${\boldsymbol x}$, denoted by $do({\boldsymbol x})$, replaces the original equations of ${\boldsymbol X}$ by ${\boldsymbol X} :={\boldsymbol x}$ and induces a \textit{sub-model}  ${\cal M}_{\boldsymbol x}$.
We denote the potential outcome (PO) $Y$ under intervention $do({x})$ by $Y_{{x}}({\boldsymbol u})$, which is a solution of $Y$ given ${\boldsymbol U}={\boldsymbol u}$ in the sub-model ${\cal M}_{x}$.


\textbf{Identification of Counterfactuals.}
Researchers often consider the following SCM, ${\cal M}$:
\begin{equation}
Y:=f_Y(X,U^Y), X:=f_X(U^X),
\end{equation}
where $U^Y$ and $U^X$ are latent exogenous variables.
$Y$ is a continuous variable and $X$ is a discrete variable  taking values in $\Omega_X=\{1,2,\dots,K\}$. 
We assume the following: 
\begin{assumption}[Continuity and Differentiability of CDF of Potential Outcome]
\label{CON}
$F_{Y_x}(y)$ is continuous and differentiable for any $x \in \Omega_X$ and $y \in \Omega_Y$.
\end{assumption}
\begin{assumption}[Exogeneity]
\label{ASEXO2}
$Y_x\indep X$ for any $x \in\Omega_X$.
\end{assumption}
\noindent {\textbf{Remark.} Assumption \ref{ASEXO2} is used to identify  $\mathbb{P}(Y_x)$, under which we have $\mathbb{P}(Y_x) = P(Y|x)$. Alternatively, we may assume \emph{conditional exogeneity} given covariates $W$, i.e., $Y_x \indep X|W$ for any $x \in \Omega_X$, in which case $\mathbb{P}(Y_x) = \int P(Y|x,w)\mathfrak{p}(w)dw$, where $\mathfrak{p}$ denotes probability density function. Another possibility is that the interventional distribution $\mathbb{P}(Y_x)$ is directly available from experimental data. All identification and bounding results in the paper are expressed in terms of $F_{Y_x}(y)=F_{Y|X=x}(y)$ under Assumption \ref{ASEXO2}. However, these results remain valid under conditional exogeneity (by replacing $F_{Y|X=x}(y)$ with $\int F_{Y|X=x,w}(y)\mathfrak{p}(w)dw$) or when experimental data are available (by replacing $F_{Y|X=x}(y)$ with  $F_{Y_x}(y)$).}

We have the following proposition:
\begin{proposition}[Identifiability of RoE]
\label{prop1}
Under SCM ${\cal M}$ and Assumption \ref{ASEXO2},
the expected PO $\mathbb{E}[Y_{x}]$ is identifiable by $\mathbb{E}[Y|X={x}]$.
\end{proposition}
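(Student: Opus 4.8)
The plan is to chain together two facts: the consistency property that is built into the SCM, and the exogeneity of Assumption \ref{ASEXO2}. First I would establish consistency. In the SCM $\mathcal{M}$, the structural function for $Y$ is $Y := f_Y(X, U^Y)$, while the potential outcome under $do(x)$ is $Y_x(\boldsymbol{u}) = f_Y(x, U^Y(\boldsymbol{u}))$ by definition of the sub-model $\mathcal{M}_x$. Hence, for any unit $\boldsymbol{u}$ with $X(\boldsymbol{u}) = x$, substituting the realized value of $X$ into $f_Y$ gives $Y(\boldsymbol{u}) = f_Y(x, U^Y(\boldsymbol{u})) = Y_x(\boldsymbol{u})$. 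This is the consistency relation $Y = Y_x$ whenever $X = x$, which at the level of conditional expectations yields $\mathbb{E}[Y \mid X = x] = \mathbb{E}[Y_x \mid X = x]$.

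Second, I would apply Assumption \ref{ASEXO2}. Since $Y_x \indep X$, conditioning on the event $X = x$ does not alter the distribution of $Y_x$, so in particular $\mathbb{E}[Y_x \mid X = x] = \mathbb{E}[Y_x]$. Combining the two equalities then gives $\mathbb{E}[Y_x] = \mathbb{E}[Y_x \mid X = x] = \mathbb{E}[Y \mid X = x]$, which is exactly the claimed identification: the counterfactual quantity $\mathbb{E}[Y_x]$ is written purely in terms of the observational conditional distribution $\mathbb{P}(Y \mid X = x)$, and hence is identifiable.

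There is no genuine technical obstacle here; the result follows directly from consistency and exogeneity by a two-step substitution. The only point requiring care is to make explicit where consistency comes from—namely, that it is not an additional assumption but a consequence of the functional form $Y := f_Y(X, U^Y)$, since $Y_x$ uses the same exogenous noise $U^Y$ and the same function $f_Y$, differing only in that the argument $X$ is held fixed at $x$. Once this is spelled out, the identification is immediate. (If one works instead under conditional exogeneity $Y_x \indep X \mid W$, the same argument applies pointwise in $w$ and is then averaged over the distribution of $W$, as noted in the Remark.)
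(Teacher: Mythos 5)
Your proof is correct and follows exactly the standard route the paper relies on: the paper omits an explicit proof of Proposition \ref{prop1} (treating it as standard), but the same two ingredients—consistency ($X=x \Rightarrow Y_x = Y$, a consequence of the structural equation $Y:=f_Y(X,U^Y)$) and exogeneity ($Y_x \indep X$)—are precisely the steps invoked in the paper's proof of Lemma \ref{theo2}. Nothing is missing; your care in deriving consistency from the SCM rather than assuming it is a welcome addition.
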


{\citet{Heckman1997} showed that for binary treatment $\Omega_X=\{1,2\}$, the value of the unobserved PO is identified in terms of the observed one under a rank-invariant assumption as follows. }
\begin{assumption}[Rank invariance assumption]\citep{Heckman1997}
\label{POD} 
$F_{Y_1}(y_1)=F_{Y_2}(y_2)$ holds  
for almost every subject whose POs are $(Y_1,Y_2)=(y_1,y_2)$.
\end{assumption}
\begin{proposition}[Identification of the unobserved PO for two POs]\citep{Heckman1997}
\label{theo1}
Under SCM ${\cal M}$ and Assumptions \ref{CON}, \ref{ASEXO2}, and \ref{POD},
given CDF $F_{Y|X=1}$ and $F_{Y|X=2}$,
for almost every subject whose POs are $(Y_1,Y_2)=(y_1,y_2)$,
the value of $y_2$ is identifiable by 
\begin{equation}
y_2=F_{Y|X=2}^{-1}(F_{Y|X=1}(y_1)).
\end{equation}
\end{proposition}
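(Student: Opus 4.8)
The plan is to reduce the claim to a short algebraic manipulation of the rank-invariance identity, after first pinning down the marginal laws of the two potential outcomes. By Assumption~\ref{ASEXO2} (Exogeneity), together with the Remark following it, the interventional CDFs are identified from the observational distribution, so that $F_{Y_1}(y) = F_{Y|X=1}(y)$ and $F_{Y_2}(y) = F_{Y|X=2}(y)$ for all $y \in \Omega_Y$. This is the distributional analogue of Proposition~\ref{prop1}, and it is what converts an otherwise counterfactual (Layer-3) quantity into something expressible in observed terms.

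Next I would invoke Assumption~\ref{POD} (rank invariance) directly: for almost every subject whose pair of potential outcomes is $(Y_1,Y_2)=(y_1,y_2)$, we have $F_{Y_1}(y_1)=F_{Y_2}(y_2)$. The intuition is that such a subject occupies the same quantile level $p \defeq F_{Y_1}(y_1)=F_{Y_2}(y_2)$ in both marginal distributions; hence knowing $y_1$ fixes the shared rank $p$, and $y_2$ is recovered as the $p$-th quantile of the $Y_2$ distribution.

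Formally, I would apply the generalized inverse $F_{Y_2}^{-1}(p)=\inf\{y : F_{Y_2}(y)\ge p\}$ to both sides of $F_{Y_1}(y_1)=F_{Y_2}(y_2)$, obtaining $F_{Y_2}^{-1}(F_{Y_1}(y_1))=F_{Y_2}^{-1}(F_{Y_2}(y_2))$. Substituting the identified marginals from the first step then yields the claimed expression $y_2=F_{Y|X=2}^{-1}(F_{Y|X=1}(y_1))$, provided the right-hand equality $F_{Y_2}^{-1}(F_{Y_2}(y_2))=y_2$ holds.

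The main obstacle is precisely this last identity. For an arbitrary CDF the generalized inverse satisfies only $F_{Y_2}(F_{Y_2}^{-1}(p))=p$ (using continuity) and $F_{Y_2}^{-1}(F_{Y_2}(y))\le y$; the equality $F_{Y_2}^{-1}(F_{Y_2}(y))=y$ can fail whenever $F_{Y_2}$ is flat immediately to the left of $y$. Here I would lean on Assumption~\ref{CON}: continuity of $F_{Y_2}$ rules out jumps, and differentiability with a strictly positive density on the support of $Y_2$ guarantees that $F_{Y_2}$ is strictly increasing, eliminating flat regions and securing $F_{Y_2}^{-1}(F_{Y_2}(y_2))=y_2$ for almost every $y_2$. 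With that in hand the chain of equalities closes and the identification of $y_2$ follows.
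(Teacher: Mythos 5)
Your proposal is correct and follows essentially the same route as the paper's own argument (which appears, in its $K$-action generalization, as the proof of Lemma~\ref{theo2}): exogeneity identifies $F_{Y_x}$ with $F_{Y|X=x}$, rank invariance equates the quantile levels, and Assumption~\ref{CON} justifies inverting the CDF. Your extra care about when $F_{Y_2}^{-1}(F_{Y_2}(y_2))=y_2$ holds is a genuine refinement the paper glosses over, though note that Assumption~\ref{CON} as stated guarantees only continuity and differentiability, not the strictly positive density you invoke to rule out flat regions.
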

\noindent The function $F_{Y|X=2}^{-1}(F_{Y|X=1}(\cdot))$ is called the shift function in \citep{Doksum1974,Doksum1976}.

\subsubsection{Parameter $\psi$ in \citep{Fay2018}.}
\citet{Fay2018} have studied the causal parameter
\begin{equation} \label{eq-psi}
\psi\defeq \mathbb{P}(Y_2> Y_1),
\end{equation}
which can be considered as a special case of PoR when $K=2$. 
They provide  bounds for $\psi$ {under Assumption \ref{ASEXO2}}, i.e., $\psi_L\leq \psi\leq \psi_U$, where 
\begin{align}
\label{eq7}
&\psi_L=\sup\nolimits_{y}\{F_{Y|X=1}(y)-F_{Y|X=2}(y)\},\\
\label{eq8}
&\psi_U=1-\sup\nolimits_{y}\{F_{Y|X=2}(y)-F_{Y|X=1}(y)\}.
\end{align}
The parameter $\psi$ has been actively studied in causal inference \citep{Fan2010,Huang2016,Lu2019_bound,Gabriel2024}.

\section{Decision-Making Using PoR and PoB}

In this section, we formally introduce new counterfactual decision-making rules inspired by DCM:  PoR and PoB. 

\subsubsection{Probabilities of potential outcome ranking (PoR).}
{Let $\bm{R}=(R_1,R_2,\dots,R_K)$ denote a permutation of elements in $\{1,2,\dots,K\}$, e.g., $\bm{R}=(1,2,\dots,K)$.  }
We denote the set of $K!$ elements that $\bm{R}$ takes as $\mathfrak{R}$.
We then define the PoR metric as follows:
\begin{definition}[PoR]
The probabilities of potential outcome ranking (PoR) of $\bm{r}=(r_1,\dots,r_K)$, for each $\bm{r} \in \mathfrak{R}$, are defined as
\begin{equation}
\label{eq11}
S_{PoR}(\bm{r}) \defeq \mathbb{P}\left(\bm{R}=\bm{r}\right)\defeq\mathbb{P}\left(Y_{r_1}> Y_{r_2}> \dots> Y_{r_K}\right).
\end{equation}
\end{definition}
\noindent The vector $\bm{R}$ can be regarded as a new discrete random variable with domain $\mathfrak{R}$, whose distribution is given by Eq.~\eqref{eq11}.
Since $Y_x$ is a continuous random variable (Assumption \ref{CON}), the probability of POs being tied is zero.
Throughout this paper, we do not consider cases in which POs are tied.

Decision-makers may seek the ranking with the largest PoR, i.e.,
\begin{equation}
\label{eq12}
\bm{r}^{PoR}\defeq\arg\max\nolimits_{\bm{r} \in \mathfrak{R}} S_{PoR}(\bm{r}).
\end{equation}
$\bm{r}^{PoR}$ answers  ({\bf Question 1}).
In concrete terms, decision-makers that employ the PoR metric conclude $r_k^{PoR}$ as the $k$-th preferred action, for $k=1,\dots,K$.
As an example, we present the six patterns ($3!=6$) of ranking for POs when $K=3$ in Table \ref{tab:my_label3}. 
PoR assesses the likelihood of each ranking pattern.


\subsubsection{Probability of achieving the best potential outcome (PoB).}
Some decision-makers may focus on only the action that achieves the largest POs.
For example, in Table \ref{tab:my_label3}, action 1 obtains the largest PO in patterns (A) and (B), that is,  $R_1=1$; $R_1=2$ in patterns (C) and (D); and $R_1=3$ in patterns (E) and (F). 
We define the probability of an action achieving the best potential outcome (PoB) as the marginalized form of PoR.
\begin{definition}[PoB]
We define 
the probability of action $r_1$ achieving the best potential outcome (PoB), for each $r_1 \in \{1,2,\dots,K\}$, as
\begin{equation}
\label{eq14}
S_{PoB}(r_1) \defeq \mathbb{P}\left(R_1=r_1\right)=\sum\nolimits_{\bm{r} \in \mathfrak{R};R_1=r_1} \mathbb{P}(\bm{R}=\bm{r}).
\end{equation}
\end{definition}
\noindent Alternatively, 
\begin{align}
S_{PoB}(r_1)=\mathbb{P}\left(\arg\max\nolimits_{k=1,\dots,K}\{Y_k\}=r_1\right).
\end{align}
Decision-makers seek the action that has the largest PoB: 
\begin{equation}
r_1^{PoB}\defeq\arg\max\nolimits_{r_1=1,2,\dots,K} S_{PoB}(r_1).
\end{equation}
$r_1^{PoB}$ answers  ({\bf Question 2}).
Decision-makers employing PoB conclude $r_1^{PoB}$ as the most preferred action.
Note that $r_1^{PoB}$ may not be equal to $r_1^{PoR}$, which is the most preferred action by $\bm{r}^{PoR}$.

\subsubsection{Example 1 (Homogeneous causal effects).}
We consider SCM $Y=X+U^Y$ with $\mathbb{E}[U^Y]=0$ and $K=3$. We have $Y_k=k+U^Y$, and the individual causal effect (ICE) $Y_3-Y_2 = Y_2-Y_1=1$ for every subject, meaning that
the causal effect is homogeneous. 
The expectations of $Y_1$, $Y_2$, and $Y_3$ are $1$, $2$, and $3$, respectively.
$Y_3> Y_2> Y_1$ holds for every subject. Then $\mathbb{P}(\bm{R}=(3,2,1))=1$ and PoR of all other ranking patterns is equal to 0.
Therefore, $\bm{r}^{PoR}=(3,2,1)$. Finally, $S_{PoB}(3) = \mathbb{P}(R_1=3)=1$ with PoB of all other actions being 0; therefore $r_1^{PoB}=3$. We conclude that action 3 is the optimal choice from  the perspectives of all three metrics RoE, PoR, and PoB. 
{In general, the three metrics lead to the same optimal choice 
when the causal effect is  homogeneous, meaning $Y_x - Y_{x-1}$ is constant.}

\subsubsection{Example 2 (Heterogeneous causal effects).}
We consider SCM $Y=XU^Y$ with $K=3$, $\mathbb{E}[U^Y]=0$, $\mathbb{P}(U^Y=0)=0$, and $\mathbb{P}(U^Y>0)=0.6$.  We have $Y_k=kU^Y$, and ICE $Y_3-Y_2 = Y_2-Y_1=U^Y$, which varies between subjects.
The expectations of $Y_1$, $Y_2$, and $Y_3$ are all $0$.
We have $\mathbb{P}(\bm{R}=\bm{r})=\mathbb{P}(Y_{r_1}> Y_{r_2}> Y_{r_3})=\mathbb{P}({r_1}U^Y> {r_2}U^Y> {r_3}U^Y)$, 
 $\mathbb{P}({r_1}U^Y> {r_2}U^Y> {r_3}U^Y|U^Y> 0)=\mathbb{I}({r_1}> {r_2}> {r_3})$, and $\mathbb{P}({r_1}U^Y> {r_2}U^Y> {r_3}U^Y|U^Y<0)=\mathbb{I}({r_3}> {r_2}> {r_1})$.
Then, $\mathbb{P}(\bm{R}=\bm{r})=\mathbb{I}({r_1}> {r_2}> {r_3})\mathbb{P}(U^Y> 0)+\mathbb{I}({r_3}> {r_2}> {r_1})\mathbb{P}(U^Y<0)$.
We obtain $\mathbb{P}(\bm{R}=(3,2,1))=0.6$,  $\mathbb{P}(\bm{R}=(1,2,3))=0.4$, and the PoR of all other ranking patterns are equal to 0. 
Therefore, $\bm{r}^{PoR}=(3,2,1)$. 
Finally, $S_{PoB}(3)=0.6$, $S_{PoB}(1)=0.4$, and $S_{PoB}(2)=0$. 
We conclude that the three actions are tied from the perspective of RoE; action 3 is the optimal choice, followed by action 2, with action 1 ranked last from the perspective of PoR; and action 3 is the optimal choice, followed by action 1, with action 2 ranked last from the perspective of PoB. 


\begin{table}[tb]
    \centering
    \begin{tabular}{c|ccc}
    \hline
     Ranking Patterns & $R_1$ & $R_2$&$R_3$ \\
       \hline\hline
      (A) & 1 & 2 &3 \\
      (B) & 1 & 3 &2 \\
       (C) &2 & 1 &3 \\
       (D) &2 & 3 &1 \\
      (E) & 3 & 1 &2 \\
      (F) & 3 & 2 &1 \\
\hline
\end{tabular}
    \caption{Six patterns of rankings for POs $(Y_1,Y_2,Y_3)$.}
    \label{tab:my_label3}
\end{table}

\textbf{Remark (RoE, PoR, PoB).}
{While PoR is constructed to assess the preference rankings of actions, one can interpret $r_1^{PoR}$, the first element in $\bm{r}^{PoR}$, as the optimal action. On the other hand, although RoE and PoB are designed to select a single best action, they can be interpreted to measure the quality of a ranking $\bm{r}=(r_1,\dots,r_K)$ of actions by the following metrics: 
\begin{align}
    S_{RoE}(\bm{r}) = \mathbb{I}\left(\mathbb{E}[Y_{r_1}] \ge \mathbb{E}[Y_{r_2}] \ge \dots \ge \mathbb{E}[Y_{r_K}]\right),
\end{align}
and 
\begin{align}
    S_{PoB}(\bm{r}) = \mathbb{I}\left(S_{PoB}(r_1) \ge S_{PoB}(r_2) \ge \dots \ge S_{PoB}(r_K)\right),
\end{align}
respectively. In other words, RoE and PoB select the best ranking of actions as $\bm{r}^{RoE}\defeq\arg\max\nolimits_{\bm{r} \in \mathfrak{R}} S_{RoE}(\bm{r})$ and $\bm{r}^{PoB}\defeq\arg\max\nolimits_{\bm{r} \in \mathfrak{R}} S_{PoB}(\bm{r})$, respectively.
}

\textbf{Remark (PoC).}
The probabilities of causation (PoC) are a family of counterfactual probabilities quantifying whether one event was the real cause of another in a given scenario \citep{Pearl1999,Tian2000}.
Personalized decision-making based on PoC with discrete treatments and outcomes has been explored by \citet{Mueller2023} and \citet{ALi2024}.
PoC with continuous outcomes were examined by \citet{Kawakami2024}. 
PoC expressions like $\mathbb{P}(Y_{x_0} < y_1 \leq Y_{x_1} \cdots < y_p \leq Y_{x_p})$ in \cite{Kawakami2024} may appear similar to PoR. 
The key difference is that PoR does not have thresholds like $y_i$ for POs. 
For binary $Y$ and $X$ ($K = 2$) taking values in $\{1, 2\}$, our PoR and PoB both reduce to the probability of necessity and sufficiency (PNS) \citep{Pearl1999,Tian2000}. Specifically,  $S_{PoR}(2, 1) = S_{PoB}(2)=\mathbb{P}(Y_2>Y_1)=\mathbb{P}(Y_2=2,Y_1=1)$.


\section{Identification and Estimation of PoR and PoB}

Next, we investigate the identification challenges associated with PoR and PoB. 
By Proposition \ref{prop1}, the RoE metric is identifiable under the exogeneity assumption.
However, the identification of PoR and PoB is more challenging and requires additional assumptions. 

\textbf{Identification of the values of the unobserved POs.} 
We extend the identification result by \citet{Heckman1997} given in Proposition~\ref{theo1} to the setting of multiple actions.
\begin{assumption}[Rank invariance assumption for multiple actions]
\label{PODm} 
$F_{Y_1}(y_1)=F_{Y_2}(y_2)=\dots=F_{Y_K}(y_K)$ holds  
for almost every subject whose POs are $(Y_1,Y_2,\dots,Y_K)=(y_1,y_2,\dots,y_k)$.
\end{assumption}
\noindent This assumption means that a subject whose PO under action 1 lies in the top $q$ percent also has POs in the top $q$ percent under all other actions $k=2,\dots,K$.
Then, given $Y_1$, the values of other unobserved POs are identifiable as follows:
\begin{lemma}[Identification of the unobserved POs]
\label{theo2}
Under SCM ${\cal M}$ and Assumptions \ref{CON}, \ref{ASEXO2}, and \ref{PODm},
given CDF $F_{Y|X=k}$ for $k=1,\dots,K$,
for almost every subject whose POs are $(Y_1,\dots,Y_K)=(y_1,\dots,y_K)$,
the values of $y_2, \dots, y_K$ are identifiable by 
\begin{equation}
y_k = F_{Y|X=k}^{-1}(F_{Y|X=1}(y_1)).
\end{equation}
\end{lemma}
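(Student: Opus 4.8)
The plan is to reduce the multi-action claim to a pointwise application of the rank-invariance and exogeneity assumptions, mirroring the binary result of Proposition~\ref{theo1}. The core observation is that Assumption~\ref{PODm} supplies, for almost every subject, the chain of equalities $F_{Y_1}(y_1)=F_{Y_k}(y_k)$ for each $k=2,\dots,K$; in particular each pair $(1,k)$ satisfies exactly the two-action rank-invariance condition of Assumption~\ref{POD}. Thus one clean route is simply to invoke Proposition~\ref{theo1} on each pair $(Y_1,Y_k)$, with action $k$ playing the role of ``action $2$'', and read off $y_k=F_{Y|X=k}^{-1}(F_{Y|X=1}(y_1))$. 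I would, however, also give the short self-contained argument directly, since it exposes exactly where each assumption is used.

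First I would use exogeneity (Assumption~\ref{ASEXO2}), which by the Remark yields $F_{Y_x}(y)=F_{Y|X=x}(y)$ for every $x$, to rewrite the rank-invariance equalities purely in terms of the observable conditional CDFs: $F_{Y|X=1}(y_1)=F_{Y|X=k}(y_k)$ for each $k$. This already expresses the subject's common rank $q\defeq F_{Y|X=1}(y_1)$ across actions in identifiable quantities, so all that remains is to solve for $y_k$.

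Next I would invert. Applying the quantile function $F_{Y|X=k}^{-1}$ to both sides of $F_{Y|X=k}(y_k)=F_{Y|X=1}(y_1)$ gives the claimed $y_k=F_{Y|X=k}^{-1}(F_{Y|X=1}(y_1))$, provided that $F_{Y|X=k}^{-1}(F_{Y|X=k}(y_k))=y_k$. This last identity is the one real obstacle: for a general CDF, the inverse composed with the CDF returns the left endpoint of the level set through $y_k$, so equality can fail on intervals where $F_{Y|X=k}$ is flat. Here Assumption~\ref{CON} rescues the argument, since continuity and differentiability endow $F_{Y|X=k}$ with a density, and any interval on which the CDF is constant carries zero density and hence zero probability mass for $Y_k$. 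Therefore the set of values $y_k$ for which the inverse identity fails has probability zero under the distribution of $Y_k$, which is precisely why the conclusion is stated for \emph{almost every} subject.

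I expect the measure-zero bookkeeping in the inversion step to be where the genuine content sits; the rest is a direct substitution. In the writeup I would make that step explicit, noting that continuity yields $F_{Y|X=k}(F_{Y|X=k}^{-1}(q))=q$ on the range, while the positive-density argument supplies the complementary direction off a probability-zero set, so the two combine to justify the inversion for almost every subject. The generalization from $K=2$ to general $K$ is then immediate, as the argument is applied independently to each coordinate $k$.
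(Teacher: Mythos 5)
Your proposal is correct and follows essentially the same route as the paper's proof: rank invariance gives the chain of CDF equalities, exogeneity converts interventional CDFs to conditional ones, and continuity justifies the inversion (the paper applies exogeneity after inverting rather than before, which is immaterial). Your explicit treatment of the measure-zero set where $F_{Y|X=k}^{-1}(F_{Y|X=k}(y_k))\neq y_k$ is in fact more careful than the paper's one-line appeal to Assumption~\ref{CON}, and correctly accounts for the ``almost every subject'' qualifier.
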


\subsubsection{Identification of PoR and PoB.}
We apply the identification result in Lemma~\ref{theo2} to the identification of PoR and PoB.
\begin{theorem}[Identification of PoR]
\label{theorem2}
Under SCM ${\cal M}$ and Assumptions \ref{CON}, \ref{ASEXO2}, and \ref{PODm}, $S_{PoR}(\bm{r})$ 
is identified by $\sigma(\bm{r})$ for any $\bm{r} \in \mathfrak{R}$, given by
\begin{align}
\sigma(\bm{r})
&=\mathbb{P}\Big(Y> F_{Y|X={r_2}}^{-1}(F_{Y|X={r_1}}(Y))> \dots\nonumber\\
&\hspace{1cm}> F_{Y|X={r_K}}^{-1}(F_{Y|X={r_1}}(Y))\Big|X={r_1}\Big).
\end{align}
\end{theorem}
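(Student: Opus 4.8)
The plan is to reduce the joint event over the $K$ unobserved potential outcomes $Y_{r_1},\dots,Y_{r_K}$ to an event that depends on the single reference outcome $Y_{r_1}$ alone, and then to replace the (interventional) law of $Y_{r_1}$ by an observable conditional law via exogeneity. The whole argument is essentially a change of variables driven by Lemma~\ref{theo2}, followed by an invocation of Assumption~\ref{ASEXO2}.

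First I would note that the rank-invariance condition in Assumption~\ref{PODm} is stated symmetrically in the actions, so Lemma~\ref{theo2} applies with any action as the reference, not only action~$1$. Re-anchoring at $r_1$, for almost every subject whose potential outcomes are $(Y_{r_1},\dots,Y_{r_K})=(y_{r_1},\dots,y_{r_K})$ we obtain
\[
y_{r_j}=F_{Y|X=r_j}^{-1}\bigl(F_{Y|X=r_1}(y_{r_1})\bigr),\qquad j=2,\dots,K,
\]
with the $j=1$ case being the trivial identity $y_{r_1}=F_{Y|X=r_1}^{-1}(F_{Y|X=r_1}(y_{r_1}))$. Since these equalities hold almost surely, the defining event of $S_{PoR}(\bm r)$, namely $\{Y_{r_1}>Y_{r_2}>\cdots>Y_{r_K}\}$, coincides up to a null set with
\[
\Bigl\{Y_{r_1}>F_{Y|X=r_2}^{-1}\bigl(F_{Y|X=r_1}(Y_{r_1})\bigr)>\cdots>F_{Y|X=r_K}^{-1}\bigl(F_{Y|X=r_1}(Y_{r_1})\bigr)\Bigr\},
\]
a set determined by the single random variable $Y_{r_1}$. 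Equality of the two events almost surely gives equality of their probabilities, so $S_{PoR}(\bm r)$ equals the probability of the latter event.

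The second step makes this probability estimable from data. By Assumption~\ref{ASEXO2} we have $Y_{r_1}\indep X$, hence the marginal law of $Y_{r_1}$ equals the conditional law of $Y$ given $X=r_1$; in particular $F_{Y_{r_1}}=F_{Y|X=r_1}$, which is exactly the object appearing inside all the shift maps above, so every term is identified from observational quantities. Replacing $Y_{r_1}$ throughout by $Y$ conditioned on $X=r_1$ then yields precisely $\sigma(\bm r)$, establishing $S_{PoR}(\bm r)=\sigma(\bm r)$.

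I expect the one point needing care to be the justification that Lemma~\ref{theo2} may be re-anchored at an arbitrary $r_1$: this rests on the symmetric form of Assumption~\ref{PODm} together with Assumption~\ref{CON}, whose continuity guarantees that each quantile inverse $F_{Y|X=k}^{-1}$ and the composed shift maps are well-defined and satisfy $F_{Y|X=k}\circ F_{Y|X=k}^{-1}=\mathrm{id}$, so the substituted inequalities are unambiguous. The remaining ingredients—passing from an almost-sure functional identity to equality of event probabilities, and swapping the interventional law for the observational conditional—are then routine.
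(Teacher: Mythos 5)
Your proposal is correct and follows essentially the same route as the paper's own proof: apply Lemma~\ref{theo2} to express each $Y_{r_k}$ as the shift map $F_{Y|X=r_k}^{-1}(F_{Y|X=r_1}(Y_{r_1}))$ of the reference outcome, substitute into the ranking event, and then invoke Assumption~\ref{ASEXO2} to replace the law of $Y_{r_1}$ by the observable conditional law of $Y$ given $X=r_1$. The only difference is that you explicitly justify re-anchoring Lemma~\ref{theo2} at an arbitrary $r_1$ via the symmetry of Assumption~\ref{PODm}, a step the paper's proof uses implicitly without comment.
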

\begin{theorem}[Identification of PoB]
\label{identi_PoB}
Under SCM ${\cal M}$ and Assumptions \ref{CON}, \ref{ASEXO2}, and \ref{PODm},
$S_{PoB}(r_1)$ is identified by $\eta(r_1)$ for any $r_1 \in \{1,\dots,K\}$, given by
\begin{align}\label{eq-pobe2}
\eta(r_1)&=\mathbb{P}\Big(Y> F_{Y|X={r_2}}^{-1}(F_{Y|X={r_1}}(Y)),\nonumber\\
&\hspace{0.5cm}Y> F_{Y|X={r_3}}^{-1}(F_{Y|X={r_1}}(Y)),\dots,\nonumber\\
&\hspace{0.5cm}Y > F_{Y|X={r_K}}^{-1}(F_{Y|X={r_1}}(Y))\Big|X={r_1}\Big),
\end{align}
where $(r_2,\dots,r_K)$ is an arbitrary permutation of elements in $\{1,2,\dots,K\}\setminus\{r_1\}$. 
\end{theorem}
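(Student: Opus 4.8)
The plan is to prove this directly from the definition of $S_{PoB}$, mirroring the argument behind Theorem~\ref{theorem2} and treating PoB as the ``conjunction'' analogue of PoR. Starting from the alternative characterization $S_{PoB}(r_1)=\mathbb{P}(\arg\max\nolimits_{k}\{Y_k\}=r_1)$, I would first rewrite the event as an intersection of pairwise comparisons,
\begin{equation*}
S_{PoB}(r_1)=\mathbb{P}\Big(\bigcap\nolimits_{j=2}^{K}\{Y_{r_1}>Y_{r_j}\}\Big),
\end{equation*}
where $(r_2,\dots,r_K)$ is any enumeration of $\{1,\dots,K\}\setminus\{r_1\}$. Because ties occur with probability zero under Assumption~\ref{CON}, the event $\{Y_{r_1}=\max_k Y_k\}$ coincides almost surely with this strict intersection, so no boundary cases need special treatment and the choice of enumeration is immaterial.

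The second step is the substitution supplied by Lemma~\ref{theo2}, applied with $r_1$ playing the role of the reference action; this is legitimate because the rank-invariance identity in Assumption~\ref{PODm} is symmetric across all actions, so any coordinate may serve as the reference. Under Assumptions~\ref{CON}, \ref{ASEXO2}, and~\ref{PODm}, for almost every subject the unobserved POs satisfy $Y_{r_j}=F_{Y|X=r_j}^{-1}(F_{Y|X=r_1}(Y_{r_1}))$ simultaneously for all $j$, since each $Y_{r_j}$ is a deterministic (comonotone) function of the single reference value $Y_{r_1}$. Substituting these identities into each inequality turns the intersection into an event depending only on $Y_{r_1}$,
\begin{equation*}
S_{PoB}(r_1)=\mathbb{P}\Big(\bigcap\nolimits_{j=2}^{K}\big\{Y_{r_1}>F_{Y|X=r_j}^{-1}(F_{Y|X=r_1}(Y_{r_1}))\big\}\Big).
\end{equation*}
Finally, Assumption~\ref{ASEXO2} ($Y_{r_1}\indep X$) lets me replace the marginal law of $Y_{r_1}$ by the observed conditional law of $Y$ given $X=r_1$, i.e.\ $F_{Y_{r_1}}=F_{Y|X=r_1}$, yielding exactly $\eta(r_1)$. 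The independence of $\eta(r_1)$ from the chosen permutation $(r_2,\dots,r_K)$ is then automatic, since the right-hand side is a conjunction over all $j\neq r_1$.

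I expect the only delicate point to be the simultaneous substitution in the second step: I must argue that the single almost-sure event from Lemma~\ref{theo2} makes all $K-1$ quantile identities hold jointly (not merely marginally), so that the intersection can be rewritten coordinate-by-coordinate without gaining or losing probability mass. This relies on the continuity and strict monotonicity of the $F_{Y|X=k}$ (Assumption~\ref{CON}) to guarantee that the quantile functions $F_{Y|X=r_j}^{-1}$ are well defined and preserve strict inequalities, and on the zero-probability of ties to discard the measure-zero exceptional set. As an alternative route, one could obtain the statement as a corollary of Theorem~\ref{theorem2} by summing $\sigma(\bm{r})$ over all $\bm{r}\in\mathfrak{R}$ with $R_1=r_1$ and checking that the disjoint ordered events partition the single conjunction above; the direct argument is cleaner, so I would present that and relegate the marginalization identity to a remark.
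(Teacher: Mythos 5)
Your proposal is correct and follows essentially the same route as the paper's own proof: rewrite $S_{PoB}(r_1)$ as the intersection of pairwise events $\{Y_{r_1}>Y_{r_j}\}$, substitute $Y_{r_j}=F_{Y|X=r_j}^{-1}(F_{Y|X=r_1}(Y_{r_1}))$ via Lemma~\ref{theo2}, and invoke exogeneity to pass to the conditional law given $X=r_1$. The extra care you take about the simultaneous (joint, almost-sure) validity of the quantile identities and the symmetry of the reference action is implicit in the paper but not a departure from it.
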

\noindent 
When $K = 2$, Theorem~\ref{theorem2} establishes the identification of the causal parameter $\psi$ in (\ref{eq-psi}), which was not addressed in \cite{Fay2018}.


\textbf{Estimation of PoR and PoB} 
We present an estimation method for PoR and PoB following the approach proposed in \citep{Doksum1974, Doksum1976}.
We denote the samples {of $Y$ corresponding to $X=k$} (or samples from the intervention $do(X=k)$ in the setting of given experimental data) as
\begin{equation}
{\cal D}^{(k)}=\{Y^{(k)}_1,\dots,Y^{(k)}_{N^{(k)}}\}
\end{equation} 
for $k=1,2,\dots,K$, where 
$N^{(k)}$ is the sample size of ${\cal D}^{(k)}$.
We sort dataset ${\cal D}^{(k)}$ and denote 
\begin{equation}
{\cal D}^{(k)}=\{Y^{(k)}_{(1)},\dots,Y^{(k)}_{(N^{(k)})}\}
\end{equation} 
where $Y^{(k)}_{(1)}\leq\dots\leq Y^{(k)}_{(N^{(k)})}$.
Then, our estimates of $\sigma(\bm{r})$ are given by
\begin{align}
&\hat{\sigma}(\bm{r})=\frac{1}{N^{(r_1)}}\sum_{i=1}^{N^{(r_1)}}\mathbb{I}\Bigg(Y_i^{(r_1)}> Y^{(r_2)}_{\left(\left\lfloor N^{(r_2)}\hat{F}_{Y|X=r_2}(Y_i^{(r_1)}) \right\rfloor+1\right)}\nonumber\\
&\hspace{1.5cm}> \dots> Y^{(r_K)}_{\left(\left\lfloor N^{(r_K)}\hat{F}_{Y|X=r_K}(Y_i^{(r_1)}) \right\rfloor+1\right)}\Bigg),
\end{align}
where  
\begin{align}\label{eq-cdf}
\hat{F}_{Y|X=r_k}(y)=\frac{1}{N^{(r_k)}}\sum\nolimits_{j=1}^{N^{(r_k)}} \mathbb{I}(Y^{(r_k)}_{j}\leq y).
\end{align}
Our estimates of $\eta(r_1)$ are given by 
\begin{align}
\hat{\eta}(r_1)&=\frac{1}{N^{(r_1)}}\sum_{i=1}^{N^{(r_1)}}\mathbb{I}\Bigg(Y_i^{(r_1)}> Y^{(r_2)}_{\left(\left\lfloor N^{(r_2)}\hat{F}_{Y|X=r_2}(Y_i^{(r_1)}) \right\rfloor+1\right)},\nonumber\\
&\hspace{0.5cm} Y_i^{(r_1)}> Y^{(r_3)}_{\left(\left\lfloor N^{(r_3)}\hat{F}_{Y|X=r_3}(Y_i^{(r_1)}) \right\rfloor+1\right)},\dots,\nonumber\\
&\hspace{0.5cm}Y_i^{(r_1)}> Y^{(r_K)}_{\left(\left\lfloor N^{(r_K)}\hat{F}_{Y|X=r_K}(Y_i^{(r_1)}) \right\rfloor+1\right)}\Bigg).
\end{align} 
The computational complexities for $\hat{\sigma}(\bm{r})$ and $\hat{\eta}(r_1)$ are ${\cal O}\left(\sum_{k=1}^K N^{(k)} \log N^{(k)}\right)$, as each sorting operation on ${\cal D}^{(k)}$ requires ${\cal O}(N^{(k)} \log N^{(k)})$ computations. They are consistent estimators as follows:
\begin{theorem}
\label{theorem3}
Under SCM ${\cal M}$ and Assumptions \ref{CON}, \ref{ASEXO2}, and \ref{PODm}, 
$\hat{\sigma}(\bm{r})$ and $\hat{\eta}(r_1)$ converge in probability to 
$S_{PoR}(\bm{r})$ and $S_{PoB}(r_1)$, respectively, at the rate ${\cal O}_p(\sum_{k=1}^K{{N^{(k)}}^{-1/2}})$.
\end{theorem}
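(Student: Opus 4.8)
The plan is to cast each estimator as a plug-in version of its identification formula (Theorem \ref{theorem2} for $\hat{\sigma}$, Theorem \ref{identi_PoB} for $\hat{\eta}$) and to separate the error into two sources: the Monte Carlo error from replacing the conditional law $\mathbb{P}(\cdot \mid X = r_1)$ by the empirical average over ${\cal D}^{(r_1)}$, and the functional estimation error from replacing the composed population maps by their empirical analogues. Define, for $k = 2,\dots,K$, the population map $g_k(y) \defeq F_{Y|X=r_k}^{-1}(F_{Y|X=r_1}(y))$, so that Theorem \ref{theorem2} reads $S_{PoR}(\bm{r}) = \mathbb{P}(Y > g_2(Y) > \dots > g_K(Y) \mid X = r_1)$. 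I would introduce the \emph{oracle} statistic that uses the true maps but the finite sample ${\cal D}^{(r_1)}$,
\begin{align}
\tilde{\sigma}(\bm{r}) = \frac{1}{N^{(r_1)}}\sum_{i=1}^{N^{(r_1)}} \mathbb{I}\Big(Y_i^{(r_1)} > g_2(Y_i^{(r_1)}) > \dots > g_K(Y_i^{(r_1)})\Big),
\end{align}
and write $\hat{\sigma}(\bm{r}) - S_{PoR}(\bm{r}) = (\hat{\sigma}(\bm{r}) - \tilde{\sigma}(\bm{r})) + (\tilde{\sigma}(\bm{r}) - S_{PoR}(\bm{r}))$. The oracle term is an average of i.i.d.\ bounded indicators drawn from $\mathbb{P}(Y \mid X = r_1)$ whose common expectation equals $\sigma(\bm{r}) = S_{PoR}(\bm{r})$ by Theorem \ref{theorem2}; the central limit theorem then gives $\tilde{\sigma}(\bm{r}) - S_{PoR}(\bm{r}) = {\cal O}_p((N^{(r_1)})^{-1/2})$.

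Next I would control the plug-in term $\hat{\sigma}(\bm{r}) - \tilde{\sigma}(\bm{r})$, where the substance of the argument lies. By the Dvoretzky--Kiefer--Wolfowitz inequality (and Glivenko--Cantelli), $\sup_y |\hat{F}_{Y|X=k}(y) - F_{Y|X=k}(y)| = {\cal O}_p((N^{(k)})^{-1/2})$ for every $k$. Under Assumption \ref{CON} the CDFs $F_{Y_x}$ are continuous and differentiable, so each $F_{Y|X=r_k}^{-1}$ is continuous; combining uniform CDF convergence with continuity of the population quantile, the empirical map $Y^{(r_k)}_{(\lfloor N^{(r_k)} \hat{F}_{Y|X=r_1}(\cdot)\rfloor + 1)}$ converges uniformly to $g_k(\cdot)$, and I would record the sup-norm error $\delta_k \defeq \sup_y |\hat{g}_k(y) - g_k(y)| = {\cal O}_p((N^{(r_k)})^{-1/2})$.

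The obstacle is that the summand is an indicator, hence not Lipschitz in the plugged-in maps, so uniform convergence of the maps does not transfer automatically. The oracle and plug-in indicators for sample $i$ can disagree only if $Y_i^{(r_1)}$ lies within the band $|Y_i^{(r_1)} - g_k(Y_i^{(r_1)})| \le \delta_k$ for some $k$. Conditioning on the samples ${\cal D}^{(r_k)}$, $k \ge 2$ (which fixes the $\delta_k$ and the empirical maps) and averaging only over the independent ${\cal D}^{(r_1)}$, a union bound shows the expected fraction of disagreements is at most $\sum_{k=2}^K \mathbb{P}(|Y - g_k(Y)| \le \delta_k \mid X = r_1)$. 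Here I would invoke anti-concentration: Assumption \ref{CON} forces the law of $Y \mid X = r_1$ to place probability mass ${\cal O}(\delta_k)$ in a band of width $\delta_k$ about each decision surface $\{y = g_k(y)\}$ (boundedness of the density near the boundary), so each term is ${\cal O}_p((N^{(r_k)})^{-1/2})$ and $\hat{\sigma}(\bm{r}) - \tilde{\sigma}(\bm{r}) = {\cal O}_p(\sum_{k=2}^K (N^{(r_k)})^{-1/2})$. The triangle inequality then delivers $\hat{\sigma}(\bm{r}) - S_{PoR}(\bm{r}) = {\cal O}_p(\sum_{k=1}^K (N^{(k)})^{-1/2})$. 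I expect this boundary/anti-concentration step to be the main difficulty, since the rate is governed not by the sup-norm error of the estimated maps alone but by how little mass the distribution of $Y \mid X = r_1$ places in the shrinking bands around each $\{y=g_k(y)\}$, which is exactly what Assumption \ref{CON} is used to guarantee.

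Finally, the argument for $\hat{\eta}(r_1)$ is identical in structure, with the chain of strict inequalities replaced by the conjunction of $K-1$ pairwise inequalities appearing in Eq.~\eqref{eq-pobe2} and with Theorem \ref{identi_PoB} supplying $\mathbb{E}[\tilde{\eta}(r_1)] = S_{PoB}(r_1)$ in place of Theorem \ref{theorem2}. The same oracle decomposition, the same DKW-plus-quantile uniform convergence, and the same union-bounded boundary-band anti-concentration estimate apply verbatim, since the disagreement event for a conjunction is still contained in the union of the per-coordinate boundary bands. This yields $\hat{\eta}(r_1) - S_{PoB}(r_1) = {\cal O}_p(\sum_{k=1}^K (N^{(k)})^{-1/2})$ and completes the proof.
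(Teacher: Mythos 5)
Your decomposition is exactly the paper's: both split the error into an oracle term (the empirical average over ${\cal D}^{(r_1)}$ of indicators built from the \emph{true} composed maps $g_k = F_{Y|X=r_k}^{-1}\circ F_{Y|X=r_1}$, controlled by the CLT at rate ${\cal O}_p((N^{(r_1)})^{-1/2})$) and a plug-in term comparing oracle indicators to indicators built from the empirical order statistics. Where you diverge is in how the plug-in term is handled. The paper simply notes that the empirical quantile composition converges to $g_k$ at rate ${\cal O}_p((N^{(r_k)})^{-1/2})$, that there are no ties, and then invokes ``the delta method'' --- which is not literally applicable to an indicator functional, so that step is really an appeal to the reader's goodwill. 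You replace it with an explicit disagreement-event argument: condition on ${\cal D}^{(r_2)},\dots,{\cal D}^{(r_K)}$, union-bound over the coordinates where oracle and plug-in indicators can differ, and use anti-concentration of $Y\mid X=r_1$ near each decision surface to show the disagreement probability is of the order of the sup-norm errors $\delta_k$. This is the more honest route, and it makes visible what the paper hides: one needs the pushforward of $Y\mid X=r_1$ under the relevant comparison functions to put mass ${\cal O}(\delta)$ in a $\delta$-band around zero, which requires slightly more than the literal content of Assumption~\ref{CON} (a bounded density and non-degenerate crossings, not merely continuity and differentiability of the CDFs); the paper's ``no ties'' remark is implicitly leaning on the same thing. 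One small correction: for $\hat{\sigma}(\bm{r})$ the chain is $Y > g_2(Y) > \dots > g_K(Y)$, so the disagreement bands must be taken around the \emph{consecutive-pair} surfaces $\{g_k(y) = g_{k+1}(y)\}$ (with $g_1 = \mathrm{id}$ and band width $\delta_k + \delta_{k+1}$), not only around $\{y = g_k(y)\}$; the form you wrote is correct for $\hat{\eta}(r_1)$, where every comparison is against $Y$ itself. With that adjustment the argument goes through and yields the stated rate ${\cal O}_p(\sum_{k=1}^K (N^{(k)})^{-1/2})$.
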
 
\noindent
When $N^{(k)}=N/K$ for any $k$, the rate becomes ${\cal O}_p(K^{3/2}{N^{-1/2}})$, which worsens as  $K$ increases. 

\section{Bounding PoR and PoB}
The exogeneity (Assumption \ref{ASEXO2}) is considered reasonable for experimental data by many researchers.
In contrast, some researchers consider the rank invariance assumption (Assumption \ref{PODm})  overly restrictive; therefore, we derive bounds of PoR and PoB without relying on Assumption \ref{PODm}. 

Since $S_{PoR}(\bm{r})$ is given as
\begin{equation}
\begin{aligned}
&\mathbb{P}\left((Y_{r_1}> Y_{r_2})\land(Y_{r_2}>Y_{r_3})\land \dots\land(Y_{r_{K-1}}> Y_{r_K})\right),
\end{aligned}    
\end{equation}
$S_{PoR}(\bm{r})$ is bounded as follows from the Fr\'{e}chet inequalities \citep{Frechet1935,Frechet1960}:
\begin{equation}
\begin{aligned}
&\max\left\{\sum\nolimits_{k=1}^{K-1}\mathbb{P}(Y_{r_{k}}> Y_{r_{k+1}})-K+2,0\right\}\leq S_{PoR}(\bm{r})\\
&\leq \min\nolimits_{k=1,\dots,K-1}\left\{\mathbb{P}(Y_{r_{k}}> Y_{r_{k+1}})\right\}.
\end{aligned}    
\end{equation}
Then, we have the following results using the bounds of $\psi$ given in Eqs.~\eqref{eq7} and \eqref{eq8}:  
\begin{theorem}[Bounds of PoR]
\label{theoremb_1}
Under SCM ${\cal M}$ and Assumptions \ref{CON} and \ref{ASEXO2},
for any $\bm{r} \in \mathfrak{R}$, 
$S_{PoR}(\bm{r})$ is bounded by $\sigma_L(\bm{r})\leq S_{PoR}(\bm{r}) \leq \sigma_U(\bm{r})$, where
\begin{align}
\label{eq31d}
&\sigma_L(\bm{r})=\max\Big\{\sum\nolimits_{k=1}^{K-1}\sup_y\{F_{Y|X=r_{k+1}}(y)\\
&\hspace{2cm}-F_{Y|X=r_{k}}(y)\}-K+2,0\Big\},\nonumber\\
\label{eq32d}
&\sigma_U(\bm{r})= \min\nolimits_{k=1,\dots,K-1}\Big\{1-\sup_y\{F_{Y|X=r_k}(y)\\
&\hspace{3cm}-F_{Y|X=r_{k+1}}(y)\}\big\}.\nonumber
\end{align}
\end{theorem}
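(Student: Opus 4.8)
The plan is to obtain both bounds by composing the Fréchet inequalities displayed immediately above the statement with the binary-action bounds of \citet{Fay2018} for $\psi$. The point of departure is the conjunction form $S_{PoR}(\bm{r})=\mathbb{P}\big((Y_{r_1}>Y_{r_2})\land\dots\land(Y_{r_{K-1}}>Y_{r_K})\big)$ and the resulting sandwich
\[
\max\Big\{\textstyle\sum_{k=1}^{K-1}\mathbb{P}(Y_{r_k}>Y_{r_{k+1}})-K+2,\,0\Big\}\le S_{PoR}(\bm{r})\le \min_{k=1,\dots,K-1}\mathbb{P}(Y_{r_k}>Y_{r_{k+1}}).
\]
It therefore suffices to bound each pairwise probability $\mathbb{P}(Y_{r_k}>Y_{r_{k+1}})$ and propagate those bounds through the two outer operations.

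First I would view each term $\mathbb{P}(Y_{r_k}>Y_{r_{k+1}})$ as an instance of $\psi=\mathbb{P}(Y_2>Y_1)$ under the relabeling in which the role of action $2$ (the outcome forced to be larger) is played by $r_k$ and the role of action $1$ by $r_{k+1}$. Assumption \ref{ASEXO2} yields $F_{Y_x}=F_{Y|X=x}$, so the bounds \eqref{eq7} and \eqref{eq8} transfer verbatim, giving
\[
\sup_y\{F_{Y|X=r_{k+1}}(y)-F_{Y|X=r_k}(y)\}\le \mathbb{P}(Y_{r_k}>Y_{r_{k+1}})\le 1-\sup_y\{F_{Y|X=r_k}(y)-F_{Y|X=r_{k+1}}(y)\}.
\]

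For the lower bound, since every pairwise term dominates its Fay lower bound, the sum inside the Fréchet maximum is at least $\sum_{k=1}^{K-1}\sup_y\{F_{Y|X=r_{k+1}}(y)-F_{Y|X=r_k}(y)\}$; because $t\mapsto\max\{t-K+2,0\}$ is nondecreasing, this reproduces $\sigma_L(\bm{r})$. For the upper bound I would invoke the elementary fact that $a_k\le b_k$ for all $k$ implies $\min_k a_k\le\min_k b_k$ (evaluate the left side at the index minimizing $b_k$), which turns the Fréchet minimum into $\min_{k}\{1-\sup_y\{F_{Y|X=r_k}(y)-F_{Y|X=r_{k+1}}(y)\}\}=\sigma_U(\bm{r})$.

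I do not expect a genuine obstacle here: the rank-invariance assumption is deliberately unused, and the argument is a direct composition of two already-established inequalities with two trivial monotonicity facts. The only place demanding care is bookkeeping of the action labels when transporting \citet{Fay2018}'s bounds—note that in the lower bound the CDF carrying the minus sign inside $\sup_y$ is that of the \emph{smaller}-outcome action $r_{k+1}$, while the roles swap in the upper bound—so I would double-check that each $\sup_y\{F_{Y|X=r_{k+1}}-F_{Y|X=r_k}\}$ and $\sup_y\{F_{Y|X=r_k}-F_{Y|X=r_{k+1}}\}$ appears with the correct sign and direction.
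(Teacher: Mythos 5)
Your proposal is correct and follows essentially the same route as the paper's own proof: decompose $S_{PoR}(\bm{r})$ as a conjunction of adjacent pairwise events, apply the Fr\'{e}chet inequalities, and then substitute the \citet{Fay2018} bounds for each $\mathbb{P}(Y_{r_k}>Y_{r_{k+1}})$ under Assumption \ref{ASEXO2}. Your explicit attention to the monotonicity of $t\mapsto\max\{t-K+2,0\}$ and of the componentwise minimum, and to which CDF carries the minus sign in each $\sup_y$, is exactly the bookkeeping the paper leaves implicit, and it is handled correctly.
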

\noindent The bounds in Theorem~\ref{theoremb_1} for $K=2$ reduce to the bounds of $\psi$. 
Similarly, since $S_{PoB}(r_1)$ is given as
\begin{align}
&\mathbb{P}\left((Y_{r_1}> Y_{r_2})\land(Y_{r_1}>Y_{r_3})\land \dots\land(Y_{r_1}> Y_{r_K})\right),
\end{align}
$S_{PoB}(r_1)$ is bounded as follows from the Fr\'{e}chet inequalities:
\begin{equation}
\begin{aligned}
&\max\left\{\sum\nolimits_{k=1}^{K-1}\mathbb{P}(Y_{r_{1}}> Y_{r_{k+1}})-K+2,0\right\}\leq S_{PoB}(r_1)\\
&\leq \min\nolimits_{k=1,\dots,K-1}\left\{\mathbb{P}(Y_{r_{1}}> Y_{r_{k+1}})\right\}.
\end{aligned}    
\end{equation}
Then, we have the following results: 
\begin{theorem}[Bounds of PoB]
\label{theoremb_2}
Under SCM ${\cal M}$ and Assumptions \ref{CON} and \ref{ASEXO2},
for any $r_1=1,2,\dots,K$,
$S_{PoB}(r_1)$ is bounded by $\eta_L(r_1)\leq S_{PoB}(r_1) \leq \eta_U(r_1)$, where
\begin{align}
\label{eq33d}
&\eta_L(r_1)=\max\Big\{\sum\nolimits_{k=1}^{K-1}\sup_y\{F_{Y|X=r_{k+1}}(y)\\
&\hspace{2cm}-F_{Y|X=r_{1}}(y)\}-K+2,0\Big\},\nonumber\\
\label{eq34d}
&\eta_U(r_1)= \min\nolimits_{k=1,\dots,K-1}\Big\{1-\sup_y\{F_{Y|X=r_1}(y)\\
&\hspace{3cm}-F_{Y|X=r_{k+1}}(y)\}\big\}.\nonumber
\end{align}
\end{theorem}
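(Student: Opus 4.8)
The plan is to combine the Fréchet-inequality bounds stated immediately before the theorem---which already express $S_{PoB}(r_1)$ through the pairwise probabilities $\mathbb{P}(Y_{r_1}>Y_{r_{k+1}})$ for $k=1,\dots,K-1$---with the pairwise bounds on $\psi=\mathbb{P}(Y_2>Y_1)$ from \eqref{eq7} and \eqref{eq8}. The entire argument amounts to substituting the pairwise bounds into the Fréchet bounds and invoking monotonicity of $\min$ and $\max$; no new assumption beyond \ref{CON} and \ref{ASEXO2} is needed, mirroring Theorem~\ref{theoremb_1}.

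First I would record the pairwise bounds. Treating the ``larger'' action $r_1$ as playing the role of action $2$ and the ``smaller'' action $r_{k+1}$ as action $1$ in the definition of $\psi$, the bounds \eqref{eq7}--\eqref{eq8} yield, for each $k=1,\dots,K-1$,
\begin{align*}
\sup_y\{F_{Y|X=r_{k+1}}(y)-F_{Y|X=r_1}(y)\}&\leq \mathbb{P}(Y_{r_1}>Y_{r_{k+1}})\\
&\leq 1-\sup_y\{F_{Y|X=r_1}(y)-F_{Y|X=r_{k+1}}(y)\}.
\end{align*}

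Second, for the upper bound I would start from the Fréchet inequality $S_{PoB}(r_1)\leq \min_{k}\mathbb{P}(Y_{r_1}>Y_{r_{k+1}})$ and apply the pairwise upper bound termwise. Since $a_k\leq b_k$ for all $k$ implies $\min_k a_k\leq \min_k b_k$, the minimum of the pairwise probabilities is dominated by $\min_k\{1-\sup_y\{F_{Y|X=r_1}(y)-F_{Y|X=r_{k+1}}(y)\}\}=\eta_U(r_1)$. For the lower bound I would start from $S_{PoB}(r_1)\geq \max\{\sum_{k=1}^{K-1}\mathbb{P}(Y_{r_1}>Y_{r_{k+1}})-K+2,0\}$, sum the pairwise lower bounds over $k$ (the inequality is preserved under adding the constant $-K+2$), and use monotonicity of $t\mapsto\max\{t,0\}$ to conclude $S_{PoB}(r_1)\geq \eta_L(r_1)$.

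The only point requiring care---rather than a genuine obstacle---is the bookkeeping of which action plays the role of ``$1$'' versus ``$2$'' in the $\psi$ bound, so that each $\sup$ term in $\eta_L$ and $\eta_U$ carries the correct subscripts ($r_1$ and $r_{k+1}$) and the correct sign of the CDF difference. Structurally the proof is identical to that of Theorem~\ref{theoremb_1}; the sole difference is that every conjunct compares the fixed action $r_1$ against $r_{k+1}$, rather than comparing consecutive actions $r_k,r_{k+1}$ along the chain, so the subscript $r_k$ in the PoR bounds is uniformly replaced by $r_1$.
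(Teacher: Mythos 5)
Your proposal is correct and follows essentially the same route as the paper: the Fr\'{e}chet inequalities reduce $S_{PoB}(r_1)$ to the pairwise probabilities $\mathbb{P}(Y_{r_1}>Y_{r_{k+1}})$, each of which is then bounded via the $\psi$-bounds of Eqs.~\eqref{eq7}--\eqref{eq8} with exogeneity supplying $F_{Y_x}=F_{Y|X=x}$, and the monotonicity of $\min$ and $\max\{\cdot,0\}$ completes the argument. Your bookkeeping of which action plays the role of ``$1$'' versus ``$2$'' in the $\psi$ bound is also the correct one, matching the final expressions for $\eta_L$ and $\eta_U$.
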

\noindent These bounds of PoR and PoB are not sharp except for the case of $K=2$.

\subsubsection{Estimation of Bounds.}
We can estimate the bounds in Eqs.~\eqref{eq31d}, \eqref{eq32d}, \eqref{eq33d}, and \eqref{eq34d} by plugging the estimates of conditional CDF $\hat{F}_{Y|X=r_k}(y)$ given in \eqref{eq-cdf}. 
The bound estimators are 
\begin{align}
&\hat{\sigma}_L(\bm{r})=\max\Big\{\sum\nolimits_{k=1}^{K-1}\hat{\sup}_y\{\hat{F}_{Y|X=r_{k+1}}(y)\\
&\hspace{3cm}-\hat{F}_{Y|X=r_{k}}(y)\}-K+2,0\Big\},\nonumber\\
&\hat{\sigma}_U(\bm{r})= \min\nolimits_{k=1,\dots,K-1}\Big\{1-\hat{\sup}_y\{\hat{F}_{Y|X=r_k}(y)\\
&\hspace{3cm}-\hat{F}_{Y|X=r_{k+1}}(y)\}\big\},\nonumber\\
&\hat{\eta}_L(r_1)=\max\Big\{\sum\nolimits_{k=1}^{K-1}\hat{\sup}_y\{\hat{F}_{Y|X=r_{k+1}}(y)\\
&\hspace{3cm}-\hat{F}_{Y|X=r_{1}}(y)\}-K+2,0\Big\},\nonumber\\
&\hat{\eta}_U(r_1)= \min\nolimits_{k=1,\dots,K-1}\Big\{1-\hat{\sup}_y\{\hat{F}_{Y|X=r_1}(y)\\
&\hspace{3cm}-\hat{F}_{Y|X=r_{k+1}}(y)\}\big\},\nonumber
\end{align}
where $\hat{\sup}_y$ denotes the supremum operator evaluated over a finite grid. Specifically, for $\Omega_Y=[a,b]$,   
$\hat{\sup}_{y \in \Omega_Y} f(y) := \max_{i=1,\dots,M} f(y^i)$,
where $y^i=a+\frac{i-1}{M-1}(b-a)$ for $i=1,\dots,M$.
The computational complexities for $\hat{\sigma}_L(\bm{r})$, $\hat{\sigma}_U(\bm{r})$, $\hat{\eta}_L(r_1)$, and $\hat{\eta}_U(r_1)$ are ${\cal O}(KM\sum_{k=1}^K N^{(k)})$.
These estimators are consistent as follows:
\begin{theorem}
\label{theorem4}
Under SCM ${\cal M}$ and Assumptions \ref{CON} and \ref{ASEXO2}, 
if $\Omega_Y$ is bounded,
$\hat{\sigma}_L(\bm{r})$, $\hat{\sigma}_U(\bm{r})$, $\hat{\eta}_L(r_1)$, and $\hat{\eta}_U(r_1)$ converge in probability to $\sigma_L(\bm{r})$, $\sigma_U(\bm{r})$, $\eta_L(r_1)$, and $\eta_U(r_1)$, respectively, at the rate ${\cal O}_p(1/M+\sum_{k=1}^K{{N^{(k)}}^{-1/2}})$.
\end{theorem}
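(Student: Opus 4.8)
The plan is to reduce all four bound estimators to a single building block and establish its consistency at the claimed rate, then propagate the error through the outer operations, which are all Lipschitz. For a generic pair of actions $(s,t)$ write the population gap and its estimator as
\[
\Delta_{st}=\sup_y\{F_{Y|X=s}(y)-F_{Y|X=t}(y)\},\qquad
\hat{\Delta}_{st}=\hat{\sup}_y\{\hat{F}_{Y|X=s}(y)-\hat{F}_{Y|X=t}(y)\}.
\]
Each of $\sigma_L(\bm{r}),\sigma_U(\bm{r}),\eta_L(r_1),\eta_U(r_1)$ and its estimator is an explicit function of at most $K-1$ such gaps, obtained by summing them and applying $\max\{\cdot-K+2,0\}$ (lower bounds) or by taking $\min$ and the shift $1-(\cdot)$ (upper bounds). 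I would therefore first prove
\[
|\hat{\Delta}_{st}-\Delta_{st}|={\cal O}_p\Big(\tfrac{1}{M}+{N^{(s)}}^{-1/2}+{N^{(t)}}^{-1/2}\Big),
\]
and then conclude, since $x\mapsto\max\{x,0\}$, $x\mapsto 1-x$ and $(x_k)\mapsto\min_k x_k$ are $1$-Lipschitz and $(x_k)\mapsto\sum_k x_k$ is additive, that the error of each bound estimator is at most the sum of the errors of its constituent gaps, which (with $K$ fixed) is dominated by $1/M+\sum_{k=1}^K{N^{(k)}}^{-1/2}$.

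The key step is a decomposition of the gap error into its two sources. Because the empirical CDF is a step function, it is not Lipschitz, so a direct grid bound on $\hat{\Delta}_{st}$ fails; the decomposition is designed to avoid this. Writing $g(y)=F_{Y|X=s}(y)-F_{Y|X=t}(y)$ and $\hat{g}(y)=\hat{F}_{Y|X=s}(y)-\hat{F}_{Y|X=t}(y)$, I would split
\[
|\hat{\Delta}_{st}-\Delta_{st}|\le
\underbrace{\big|\hat{\sup}_y\hat{g}-\hat{\sup}_y g\big|}_{T_1}
+\underbrace{\big|\hat{\sup}_y g-\sup_y g\big|}_{T_2}.
\]
The first term $T_1$ is routed through the finite grid: since $\hat{\sup}_y$ is a maximum over the $M$ grid points, $T_1\le\max_i|\hat{g}(y^i)-g(y^i)|\le\sup_y|\hat{g}(y)-g(y)|$, so only the estimation error enters. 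The second term $T_2$ is routed through the smooth population function $g$, so only the discretization error enters.

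For $T_2$, Assumption~\ref{CON} ensures that the conditional CDFs are differentiable, so on the compact domain $\Omega_Y=[a,b]$ the function $g$ is Lipschitz with a finite constant $L$ (its derivative being the bounded density difference). Since the grid has resolution $(b-a)/(M-1)$, every $y$ lies within $(b-a)/\big(2(M-1)\big)$ of a grid point, whence $0\le\sup_y g-\max_i g(y^i)\le L(b-a)/\big(2(M-1)\big)={\cal O}(1/M)$. For $T_1$, the triangle inequality gives $\sup_y|\hat{g}-g|\le\sup_y|\hat{F}_{Y|X=s}-F_{Y|X=s}|+\sup_y|\hat{F}_{Y|X=t}-F_{Y|X=t}|$, and the uniform (Glivenko--Cantelli / Dvoretzky--Kiefer--Wolfowitz) convergence of the empirical CDF yields $\sup_y|\hat{F}_{Y|X=k}-F_{Y|X=k}|={\cal O}_p({N^{(k)}}^{-1/2})$ for each $k$. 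Combining these gives the stated rate for $\hat{\Delta}_{st}$.

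The main obstacle is exactly the non-Lipschitzness of $\hat{F}_{Y|X=k}$ flagged above: the trick that makes the proof work is sending the discretization error through the smooth $g$ (where Lipschitzness yields $1/M$) and the sampling error through the finite grid (where $\hat{\sup}$ is trivially $1$-Lipschitz, yielding $N^{-1/2}$), so that the two rates decouple. Minor points to verify are that $\sup_y g$ is attained (it is, as $g$ is continuous on a compact set, so $T_2$ is well defined) and that, in aggregating the $K-1$ gaps, each sample set ${\cal D}^{(k)}$ appears in only finitely many gaps, so the union over pairs still collapses to $\sum_{k=1}^K{N^{(k)}}^{-1/2}$ rather than a larger sum.
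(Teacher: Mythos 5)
Your proposal is correct and follows essentially the same route as the paper: the paper likewise splits the error at the intermediate quantities $\tilde{\sigma}_L,\tilde{\sigma}_U,\tilde{\eta}_L,\tilde{\eta}_U$ (true CDFs evaluated with the grid supremum $\hat{\sup}_y$), attributing ${\cal O}_p(\sum_k {N^{(k)}}^{-1/2})$ to the CDF estimation error and ${\cal O}(1/M)$ to the grid discretization, exactly your $T_1$/$T_2$ split. Your write-up is in fact more explicit than the paper's (naming DKW for the empirical-CDF rate and the Lipschitzness of the population gap $g$ for the grid error), but it is the same argument.
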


\section{Numerical Experiments}

\begin{table*}[t]
    \centering
    \scalebox{0.9}{
    \begin{tabular}{cc|cccc}
    \hline
      Setting&  Measures & $N'=30$& $N'=300$& $N'=3000$& True Value \\
        \hline\hline
       (A) & $S_{PoR}(1, 2, 3)$  & {0.557 [0.149,0.918]}   & 0.580 [0.421,0.720] & 0.635 [0.560,0.638] & 0.666\\
       (A) & $S_{PoB}(1)$    & 0.561 [0.133,0.884] & 0.645 [0.439,0.763] &  0.650 [0.580,0.702]  & 0.666 \\
       \hline
       (B) & UB  of $S_{PoR}(1, 2, 3)$ & 0.813 [0.683,0.900] & 0.857 [0.806,0.890] &  0.873 [0.855,0.887]  & $0.500$ \\
       (B) & LB  of $S_{PoR}(1, 2, 3)$ & 0.000 [0.000,0.000] & 0.000 [0.000,0.000] &  0.000 [0.000,0.000]  & $0.500$ \\
       (B) & UB  of $S_{PoB}(1)$ & 0.767 [0.649,0.867] & 0.763 [0.708,0.813] &  0.767 [0.745,0.798]  & $0.500$ \\
       (B) & LB  of $S_{PoB}(1)$ & 0.000 [0.000,0.000] & 0.000 [0.000,0.000] &  0.000 [0.000,0.000]  & $0.500$ \\
       \hline
    \end{tabular}
    }
    \caption{Results of experiments (three candidate actions): Mean estimates with 95\% CI. UB: upper bound; LB: lower bound. 
    }
    \label{tab:my_label4}
\end{table*}

\begin{table*}[tb]
    \centering
    \scalebox{0.9}{
    \begin{tabular}{cc|cccc}
    \hline
     Setting&   Measures & $K=3$ & $K=5$& $K=10$& $K=20$ \\
        \hline\hline
    (C) &  
    $S_{PoR}(1,2,\dots,K)$ & 0.022 [0.001,0.067] & 0.047 [0.007,0.104]  & 0.117 [0.049,0.210] & 0.256 [0.147,0.391]  \\
    (C) &  
    $S_{PoB}(1)$ & 0.015 [0.001,0.043] & 0.015 [0.001,0.044] & 0.016 [0.004,0.045] &  0.023 [0.012,0.048]    \\
      \hline
    \end{tabular}
    }
    \caption{Results of experiments (many candidate actions): Absolute errors of the estimates from the truth, with 95\% CIs.}
    \label{tab:my_label5}
\end{table*}

We illustrate the finite-sample properties of the estimators.
To our knowledge, there are no existing baselines for estimating PoR or PoB when the number of actions $K \geq 3$.

\subsection{Three Candidate Actions}

First, we evaluate the performance of the PoR and PoB estimators  in the case of three candidate actions $K=3$.

\textbf{Settings.}
We assume the following SCMs:
\begin{align}
&\text{(A): } Y:=-(X-4)U^Y, U^Y \sim \text{Unif}(-0.5,1),\\
&\text{(B): } Y:=-XU^Y+{U'}^Y,\nonumber\\
&\hspace{1cm}U^Y \sim {\cal N}(-1,1),  {U'}^Y \sim \text{Unif}(-1,1),
\end{align}
where $X$ takes $\{1,2,3\}$ and $\mathbb{P}(X=k)=1/3$ for any $k=1,2,3$.
Both SCMs satisfy Assumptions \ref{CON} and \ref{ASEXO2}.
SCM (A) satisfies the rank invariance assumption (Assumption \ref{PODm}), whereas SCM (B) violates it. We use SCM (A) to illustrate the performance of our PoR and PoB estimators, and use SCM (B) to illustrate the performance of the bounds estimators.
We perform 100 simulation runs and report the mean along with the 95\% confidence intervals (CI) for each statistic. We use sample size $N=KN^{(k)}$ with varying $N^{(k)}=N'=30, 300, 3000$ for $k=1,2,3$.

\textbf{Results.}
We present in Table \ref{tab:my_label4} the estimates for $S_{PoR}(1, 2, 3)$ and $S_{PoB}(1)$ 
under SCM (A), and the upper bound (UB) and lower bound (LB) for $S_{PoR}(1, 2, 3)$ and $S_{PoB}(1)$ 
under SCM (B). 
The results under SCM (A) show that the estimates become more reliable as the sample size increases. 
Our model correctly predicts that  the most probable ranking is $(1, 2, 3)$, and  the  action that is the most likely to yield the best outcome is $1$ (both probabilities are greater than 0.5). 
The results under SCM (B) show that the ground truth value lies within the computed bounds, and the CIs of estimates become narrower as the sample size increases. 
All the computed lower bounds are non-informative for this model.


\subsection{Many Candidate Actions}

Next, we examine the reliability of the estimated PoR and PoB under the rank invariance assumption
as the number of actions $K$ increases.

\textbf{Setting.}
We assume the following SCM:
\begin{align}
&\text{(C): } Y:=-XU^Y, U^Y \sim \text{Unif}(-1,1),
\end{align}
where $X$ takes $\{1,2,\dots,K\}$ and $\mathbb{P}(X=k)=1/K$ for any $k=1,2,\dots,K$.
This SCM satisfies Assumptions \ref{CON}, \ref{ASEXO2}, and \ref{PODm}.
We perform 100 simulation runs and report the mean along with the 95\% CI for each estimate, using a fixed per action sample size of $N' = 3000$ and varying the number of actions $K \in \{3, 5, 10, 20\}$.

\textbf{Results.}
Table \ref{tab:my_label5} shows the absolute errors of 
estimated $P_{PoR}(1,2,\dots,K)$ and $S_{PoB}(1)$ from their ground truths.
As the number of candidate actions $K$ increases, the estimates deviate further from the ground truth. 
{We present curves of absolute errors of estimates vs. $K$ in Figures~\ref{fig:PoRK} and~\ref{fig:PoBK} in Appendix B. The errors roughly increase linearly in terms of $K$, matching with the convergence rate results in Theorem~\ref{theorem3}.} 

\section{Application to Real-World Data}


\subsubsection{Dataset.}
We pick up an open medical  dataset ``coagulation" \citep{Kropf2000} in (\texttt{https://cran.r-project.org/web/packages/\\SimComp/SimComp.pdf}).
Three extracorporeal circulation systems in heart-lung machines were evaluated: treatments ``H" and ``B", alongside the standard ``S". 
The study included 12 male adult patients each for treatments S and H, and 11 for treatment B. 
The analysis focused on laboratory parameters related to blood coagulation, using three primary endpoints expressed as the ratio of post- to pre-surgery values. 
Higher values indicate a more beneficial or favorable treatment effect.
\citet{Kropf2000} compared the mean vectors of outcomes using their developed statistical hypothesis testing method.
We focus on an endpoint referred to as "\texttt{Thromb.count}" among the three endpoints.
Decision-makers, such as doctors, try to choose the most effective 
treatment to achieve better endpoint results.
We denote POs as $Y_{S}$, $Y_{H}$, and $Y_{B}$.

\textbf{Analysis.}
We adopt Assumption \ref{ASEXO2} (exogeneity), as \citet{Kropf2000} did not report the presence of any confounding variables.
We impose Assumption \ref{CON}, as the outcome variable is continuously distributed.
We assume the rank invariant assumption (Assumption \ref{PODm}).
The rank invariance assumption implies that a patient who belongs to the top $q$ percent in terms of the endpoint under treatment $S$ also belongs to the top $q$ percent under treatments $B$ and $H$.
We conduct the bootstrapping \citep{Efron1979} to provide the means and 95\% confidence intervals (CIs) for each estimator.

\textbf{Results.}
We show the estimates of RoE, six patterns of PoR, and three PoB in Table \ref{tab:my_label6}.
We also compute the bounds for each estimator and present the results in Appendix C. 
The bounds are relatively wide. 
{Overall, due to the small sample size, the CIs of the estimates for different treatments have considerable overlap, and it's difficult to draw reliable conclusions about the effectiveness ranking of the three treatments. }

{In the following, we analyze the results purely based on the mean values. From Table \ref{tab:my_label6}, all three metrics conclude that treatment $B$ is the most effective. After $B$, RoE and PoB will (slightly) prefer treatment $H$ over $S$. However, the most probable ranking by PoR is $(B,S,H)$, which prefers $S$ over $H$. 
Our proposed PoR and PoB metrics constitute alternative frameworks for decision-making, which may  yield action choices that in some cases differ significantly from those recommended by traditional RoE. Furthermore, they provide more nuanced insights into treatment effects. For example,  PoR reveals  that, for 8.1\% of patients,  treatment $S$ is the most effective, followed by $H$, and then $B$. This indicates the existence of patients whose outcome ranking is completely opposite to that implied by RoE. As another example, although treatment $B$ is the most effective on average, PoB reveals that $B$ is the optimal choice for only 49\% of patients, not for the other 51\%. The information provided by PoR and PoB enables clinicians to better understand the benefits and risks associated with each treatment choice.
}

\begin{table}[t]
    \centering
    \scalebox{0.9}{
    \begin{tabular}{c|c}
    \hline
    RoE, PoR, or PoB &  Estimates\\
        \hline\hline
    $\mathbb{E}[Y_B]$  & \textbf{0.994} [0.859,1.128] \\
      $\mathbb{E}[Y_H]$    & 0.917 [0.757,1.087] \\
      $\mathbb{E}[Y_S]$  & 0.873 [0.782,0.984] \\
      \hline
      $S_{PoR}(B,H,S)$  & 0.215 [0.000,0.500] \\
      $S_{PoR}(B,S,H)$  & \textbf{0.277} [0.083,0.583] \\
      $S_{PoR}(H,B,S)$  & 0.172 [0.000,0.467] \\
      $S_{PoR}(H,S,B)$  & 0.123 [0.000,0.333] \\
      $S_{PoR}(S,B,H)$  & 0.132 [0.000,0.333] \\
      $S_{PoR}(S,H,B)$  & 0.081 [0.000,0.250] \\
     \hline
      $S_{PoB}(B)$  & \textbf{0.490} [0.182,0.818]\\
      $S_{PoB}(H)$  & 0.295 [0.083,0.583]\\
      $S_{PoB}(S)$  & 0.216 [0.000,0.500]\\
      \hline
    \end{tabular}
    }
    \caption{Mean estimates with 95\% CIs of the RoE, PoR, and PoB metrics.
    We highlight the highest values. 
    }
    \label{tab:my_label6}
\end{table}

\section{Conclusion}

We propose two novel counterfactual decision-making rules. 
Decision-making based on PoR or PoB differs from that based on the traditional RoE.
We do not claim that one approach is universally better than the other, as the choice often depends on the decision-maker’s perspective and preferences.
An individual who prioritizes the expected outcome—possibly reflecting a more aggressive decision-making style—may prefer actions based on RoE, even when the probability of receiving the best individual outcome is low.
In contrast, a more conservative decision-maker may favor PoB, prioritizing the likelihood of achieving the best individual outcome, even if it means foregoing the action with the highest expected outcome.
There may also be decision-makers who aim to balance different perspectives, incorporating considerations of all three RoE, PoR, and PoB into their choices.

Evaluating the PoR and PoB metrics poses greater challenges than evaluating RoE. Our identification results rely on the restrictive rank invariance assumption, which may limit their practical applications. We provide bounds that do not rely on this assumption, but they are not sharp in general.  Deriving tight bounds for  PoR and PoB  remains an important direction for future research. We see personalized decision-making using PoB or PoR as a promising direction for future research, offering an alternative to the traditional RoE, such as in  multi-armed bandit (MAB) or reinforcement learning problems.

\section*{Acknowledgements}

The authors thank the anonymous reviewers for their time
and thoughtful comments.

\bibliography{aaai25}

\newpage
\onecolumn
\appendix

\section*{\LARGE Appendix to ``Potential Outcome Rankings for Counterfactual Decision Making"}

\section*{Appendix A: Proof}
\label{appA}

In this appendix, we provide proofs of lemmas and theorems in the body of the paper.\\

\noindent{\bf Lemma \ref{theo2}}
{\it
Under SCM ${\cal M}$ and Assumptions \ref{CON}, \ref{ASEXO2}, and \ref{PODm},
given CDF $F_{Y|X=k}$ for $k=1,\dots,K$,
for almost every subject whose POs are $(Y_1,\dots,Y_K)=(y_1,\dots,y_K)$,
the values of $y_2, \dots, y_K$ are identifiable by 
\begin{equation}
y_k = F_{Y|X=k}^{-1}(F_{Y|X=1}(y_1)).
\end{equation}
}

\begin{proof}
From Assumption \ref{PODm}, we have
\begin{equation}
F_{Y_1}(y_1)=F_{Y_2}(y_2)=\dots=F_{Y_K}(y_K).
\end{equation}
From Assumption \ref{CON}, we have
\begin{equation}
y_k = F_{Y_k}^{-1}(F_{Y_1}(y_1)).
\end{equation}
From Assumption \ref{ASEXO2}, 
we have 
\begin{align}
y_k&=F_{Y_k}^{-1}(F_{Y_1}(y_1))\\
&=F_{Y_k|X=k}^{-1}(F_{Y_1|X=1}(y_1))\ \ (\because Y_k \indep X, \forall k)\\
&=F_{Y|X=k}^{-1}(F_{Y|X=1}(y_1))\ \ (\because X=k \Rightarrow Y_k=Y, \forall k).
\end{align}
Then, we have Lemma \ref{theo2}.
\end{proof}

\noindent{\bf Theorem \ref{theorem2}.}
{\it
Under SCM ${\cal M}$ and Assumptions \ref{CON}, \ref{ASEXO2}, and \ref{PODm}, $S_{PoR}(\bm{r})$ 
is identified by $\sigma(\bm{r})$ for any $\bm{r} \in \mathfrak{R}$, given by
\begin{align}
\sigma(\bm{r})
&=\mathbb{P}\Big(Y> F_{Y|X={r_2}}^{-1}(F_{Y|X={r_1}}(Y))> \dots > F_{Y|X={r_K}}^{-1}(F_{Y|X={r_1}}(Y))\Big|X={r_1}\Big).
\end{align}
}

\begin{proof}
From Lemma \ref{theo2}, we have $Y_{r_k}=F_{Y|X={r_k}}^{-1}(F_{Y|X={r_1}}(Y_{r_1}))$ since $y_k = F_{Y|X=k}^{-1}(F_{Y|X=1}(y_1))$ holds for any subjects.
Then, we have
\begin{align}
&\mathbb{P}\left(Y_{r_1}> Y_{r_2}> \dots> Y_{r_K}\right)\\
&=\mathbb{P}(Y_{r_1}> F_{Y|X={r_2}}^{-1}(F_{Y|X={r_1}}(Y_{r_1}))> \dots> F_{Y|X={r_K}}^{-1}(F_{Y|X={r_1}}(Y_{r_1})))\\
&=\mathbb{P}(Y> F_{Y|X={r_2}}^{-1}(F_{Y|X={r_1}}(Y))> \dots> F_{Y|X={r_K}}^{-1}(F_{Y|X={r_1}}(Y))|X={r_1}),
\end{align}
{where the last step is by Assumption \ref{ASEXO2}.} Then, we have Theorem \ref{theorem2}.
\end{proof}

\noindent{\bf Theorem \ref{identi_PoB}.}
{\it
Under SCM ${\cal M}$ and Assumptions \ref{CON}, \ref{ASEXO2}, and \ref{PODm},
$S_{PoB}(r_1)$ is identified by $\eta(r_1)$ for any $r_1 \in \{1,\dots,K\}$, given by
\begin{align}
\eta(r_1)&=\mathbb{P}\Big(Y> F_{Y|X={r_2}}^{-1}(F_{Y|X={r_1}}(Y)),Y> F_{Y|X={r_3}}^{-1}(F_{Y|X={r_1}}(Y)),\dots,Y > F_{Y|X={r_K}}^{-1}(F_{Y|X={r_1}}(Y))\Big|X={r_1}\Big),
\end{align}
where $(r_2,\dots,r_K)$ is an arbitrary permutation of elements in $\{1,2,\dots,K\}\setminus\{r_1\}$. 
}

\begin{proof}
PoB can be represented by
\begin{equation}
S_{PoB}(r_1)=\mathbb{P}(Y_{r_1}>Y_{r_2},Y_{r_1}>Y_{r_3},\dots,Y_{r_1}>Y_{r_K}),
\end{equation}
where $(r_2,\dots,r_K)$ is an arbitrary permutation of elements in $\{1,2,\dots,K\}\setminus\{r_1\}$. 
Thus, from Lemma \ref{theo2}, it is identifiable by
\begin{align}
&S_{PoB}(r_1)\\
&=\mathbb{P}(Y_{r_1}>Y_{r_2},Y_{r_1}>Y_{r_3},\dots,Y_{r_1}>Y_{r_K})\\
&=\mathbb{P}(Y_{r_1}>F_{Y|X={r_2}}^{-1}(F_{Y|X={r_1}}(Y_{r_1})),Y_{r_1}>F_{Y|X={r_3}}^{-1}(F_{Y|X={r_1}}(Y_{r_1})),\dots,Y_{r_1}>F_{Y|X={r_K}}^{-1}(F_{Y|X={r_1}}(Y_{r_1})))\\
&=\mathbb{P}(Y>F_{Y|X={r_2}}^{-1}(F_{Y|X={r_1}}(Y)),Y>F_{Y|X={r_3}}^{-1}(F_{Y|X={r_1}}(Y)),\dots,Y>F_{Y|X={r_K}}^{-1}(F_{Y|X={r_1}}(Y))|X=r_1).
\end{align}
\end{proof}

\noindent{\bf Theorem \ref{theorem3}.}
{\it 
Under SCM ${\cal M}$ and Assumptions \ref{CON}, \ref{ASEXO2}, and \ref{PODm}, 
$\hat{\sigma}(\bm{r})$ and $\hat{\eta}(r_1)$ converge in probability to 
$S_{PoR}(\bm{r})$ and $S_{PoB}(r_1)$, respectively, at the rate ${\cal O}_p(\sum_{k=1}^K{{N^{(k)}}^{-1/2}})$.
}

\begin{proof}
The error of $\hat{\sigma}(\bm{r})$ is given as
\begin{align}
&\sigma(\bm{r})-\hat{\sigma}(\bm{r})\\
&=\mathbb{P}\Big(Y> F_{Y|X={r_2}}^{-1}(F_{Y|X={r_1}}(Y))> \dots> F_{Y|X={r_K}}^{-1}(F_{Y|X={r_1}}(Y))\Big|X={r_1}\Big)\\
&-\frac{1}{N^{(r_1)}}\sum_{i=1}^{N^{(r_1)}}\mathbb{I}\Bigg(Y_i^{(r_1)}> Y^{(r_2)}_{\left(\left\lfloor N^{(r_2)}\hat{F}_{Y|X=r_2}(Y_i^{(r_1)}) \right\rfloor+1\right)}> \dots> Y^{(r_K)}_{\left(\left\lfloor N^{(r_K)}\hat{F}_{Y|X=r_K}(Y_i^{(r_1)}) \right\rfloor+1\right)}\Bigg)\\
&=\Bigg\{\mathbb{P}\Big(Y> F_{Y|X={r_2}}^{-1}(F_{Y|X={r_1}}(Y))> \dots> F_{Y|X={r_K}}^{-1}(F_{Y|X={r_1}}(Y))\Big|X={r_1}\Big)\\
&-\frac{1}{N^{(r_1)}}\sum_{i=1}^{N^{(r_1)}}\mathbb{I}\Big(Y> F_{Y|X={r_2}}^{-1}(F_{Y|X={r_1}}(Y_i^{(r_1)}))> \dots> F_{Y|X={r_K}}^{-1}(F_{Y|X={r_1}}(Y_i^{(r_1)}))\Big)\Bigg\} \cdots\text{(A)}\\
&+\Bigg\{\frac{1}{N^{(r_1)}}\sum_{i=1}^{N^{(r_1)}}\mathbb{I}\Big(Y> F_{Y|X={r_2}}^{-1}(F_{Y|X={r_1}}(Y_i^{(r_1)}))> \dots> F_{Y|X={r_K}}^{-1}(F_{Y|X={r_1}}(Y_i^{(r_1)}))\Big)\\
&-\frac{1}{N^{(r_1)}}\sum_{i=1}^{N^{(r_1)}}\mathbb{I}\Bigg(Y_i^{(r_1)}> Y^{(r_2)}_{\left(\left\lfloor N^{(r_2)}\hat{F}_{Y|X=r_2}(Y_i^{(r_1)}) \right\rfloor+1\right)}> \dots> Y^{(r_K)}_{\left(\left\lfloor N^{(r_K)}\hat{F}_{Y|X=r_K}(Y_i^{(r_1)}) \right\rfloor+1\right)}\Bigg)\Bigg\} \cdots\text{(B)}.
\end{align}
For term (A), since $Y_i^{(r_1)}$ are i.i.d. samples from $\mathbb{P}(Y|X={r_1})$, $\frac{1}{N^{(r_1)}}\sum_{i=1}^{N^{(r_1)}}\mathbb{I}\Big(Y> F_{Y|X={r_2}}^{-1}(F_{Y|X={r_1}}(Y_i^{(r_1)}))> \dots> F_{Y|X={r_K}}^{-1}(F_{Y|X={r_1}}(Y_i^{(r_1)}))\Big) \rightarrow_p \mathbb{P}\Big(Y> F_{Y|X={r_2}}^{-1}(F_{Y|X={r_1}}(Y))> \dots> F_{Y|X={r_K}}^{-1}(F_{Y|X={r_1}}(Y))|X={r_1}\Big)$ at the rate ${\cal O}_p({{N^{(r_1)}}^{-1/2}})$ as $N^{(r_1)} \rightarrow \infty$.
For term (B), since no ties $Y> F_{Y|X={r_2}}^{-1}(F_{Y|X={r_1}}(Y))> \dots> F_{Y|X={r_K}}^{-1}(F_{Y|X={r_1}}(Y))$ and $Y^{(r_k)}_{\left(\left\lfloor N^{(r_k)}\hat{F}_{Y|X=r_k}(Y_i^{(r_1)}) \right\rfloor+1\right)} \rightarrow_p F_{Y|X={r_k}}^{-1}(F_{Y|X={r_1}}(Y_i^{(r_1)}))$ at the rate ${\cal O}_p({{N^{(r_1)}}^{-1/2}})$ as $N^{(k)} \rightarrow \infty$ for any $k=1,\dots,K$ \citep{Vaart1998}, from the delta method, we have $\frac{1}{N^{(r_1)}}\sum_{i=1}^{N^{(r_1)}}\mathbb{I}\Bigg(Y_i^{(r_1)}> Y^{(r_2)}_{\left(\left\lfloor N^{(r_2)}\hat{F}_{Y|X=r_2}(Y_i^{(r_1)}) \right\rfloor+1\right)}> \dots> Y^{(r_K)}_{\left(\left\lfloor N^{(r_K)}\hat{F}_{Y|X=r_K}(Y_i^{(r_1)}) \right\rfloor+1\right)}\Bigg) \rightarrow_p \frac{1}{N^{(r_1)}}\sum_{i=1}^{N^{(r_1)}}\mathbb{I}\Big(Y> F_{Y|X={r_2}}^{-1}(F_{Y|X={r_1}}(Y_i^{(r_1)}))> \dots> F_{Y|X={r_K}}^{-1}(F_{Y|X={r_1}}(Y_i^{(r_1)}))\Big)$ at the rate ${\cal O}_p({{N^{(r_k)}}^{-1/2}})$ as $N^{(k)} \rightarrow \infty$ for each $k=2,\dots,K$.
Then, $\hat{\sigma}(\bm{r})$ converges in probability to $S_{PoR}(\bm{r})$ at the rate ${\cal O}_p(\sum_{k=1}^K{{N^{(k)}}^{-1/2}})$.
Similarly,
the error of $\hat{\eta}(r_1)$ is given as
\begin{align}
&\eta(r_1)-\hat{\eta}(r_1)\\
&=\mathbb{P}\Big(Y> F_{Y|X={r_2}}^{-1}(F_{Y|X={r_1}}(Y)), \dots, Y> F_{Y|X={r_K}}^{-1}(F_{Y|X={r_1}}(Y))\Big|X={r_1}\Big)\\
&-\frac{1}{N^{(r_1)}}\sum_{i=1}^{N^{(r_1)}}\mathbb{I}\Bigg(Y_i^{(r_1)}> Y^{(r_2)}_{\left(\left\lfloor N^{(r_2)}\hat{F}_{Y|X=r_2}(Y_i^{(r_1)}) \right\rfloor+1\right)}, \dots,Y_i^{(r_1)}> Y^{(r_K)}_{\left(\left\lfloor N^{(r_K)}\hat{F}_{Y|X=r_K}(Y_i^{(r_1)}) \right\rfloor+1\right)}\Bigg)\\
&=\Bigg\{\mathbb{P}\Big(Y> F_{Y|X={r_2}}^{-1}(F_{Y|X={r_1}}(Y)), \dots, Y> F_{Y|X={r_K}}^{-1}(F_{Y|X={r_1}}(Y))\Big|X={r_1}\Big)\\
&-\frac{1}{N^{(r_1)}}\sum_{i=1}^{N^{(r_1)}}\mathbb{I}\Big(Y> F_{Y|X={r_2}}^{-1}(F_{Y|X={r_1}}(Y_i^{(r_1)})), \dots, Y> F_{Y|X={r_K}}^{-1}(F_{Y|X={r_1}}(Y_i^{(r_1)}))\Big)\Bigg\} \cdots\text{(A)}\\
&+\Bigg\{\frac{1}{N^{(r_1)}}\sum_{i=1}^{N^{(r_1)}}\mathbb{I}\Big(Y> F_{Y|X={r_2}}^{-1}(F_{Y|X={r_1}}(Y_i^{(r_1)})), \dots, Y> F_{Y|X={r_K}}^{-1}(F_{Y|X={r_1}}(Y_i^{(r_1)}))\Big)\\
&-\frac{1}{N^{(r_1)}}\sum_{i=1}^{N^{(r_1)}}\mathbb{I}\Bigg(Y_i^{(r_1)}> Y^{(r_2)}_{\left(\left\lfloor N^{(r_2)}\hat{F}_{Y|X=r_2}(Y_i^{(r_1)}) \right\rfloor+1\right)}, \dots, Y_i^{(r_1)}> Y^{(r_K)}_{\left(\left\lfloor N^{(r_K)}\hat{F}_{Y|X=r_K}(Y_i^{(r_1)}) \right\rfloor+1\right)}\Bigg)\Bigg\} \cdots\text{(B)}.
\end{align}
For term (A), since $Y_i^{(r_1)}$ are i.i.d. samples from $\mathbb{P}(Y|X={r_1})$, $\frac{1}{N^{(r_1)}}\sum_{i=1}^{N^{(r_1)}}\mathbb{I}\Big(Y> F_{Y|X={r_2}}^{-1}(F_{Y|X={r_1}}(Y_i^{(r_1)})), \dots,Y> F_{Y|X={r_K}}^{-1}(F_{Y|X={r_1}}(Y_i^{(r_1)}))\Big) \rightarrow_p \mathbb{P}\Big(Y> F_{Y|X={r_2}}^{-1}(F_{Y|X={r_1}}(Y)), \dots, Y> F_{Y|X={r_K}}^{-1}(F_{Y|X={r_1}}(Y))|X={r_1}\Big)$ at the rate ${\cal O}_p({{N^{(r_1)}}^{-1/2}})$ as $N^{(r_1)} \rightarrow \infty$.
For term (B), since no ties $Y> F_{Y|X={r_2}}^{-1}(F_{Y|X={r_1}}(Y)), \dots, Y> F_{Y|X={r_K}}^{-1}(F_{Y|X={r_1}}(Y))$ and $Y^{(r_k)}_{\left(\left\lfloor N^{(r_k)}\hat{F}_{Y|X=r_k}(Y_i^{(r_1)}) \right\rfloor+1\right)} \rightarrow_p F_{Y|X={r_k}}^{-1}(F_{Y|X={r_1}}(Y_i^{(r_1)}))$ at the rate ${\cal O}_p({{N^{(r_1)}}^{-1/2}})$ as $N^{(k)} \rightarrow \infty$ for any $k=1,\dots,K$ \citep{Vaart1998}, from the delta method, we have $\frac{1}{N^{(r_1)}}\sum_{i=1}^{N^{(r_1)}}\mathbb{I}\Bigg(Y_i^{(r_1)}> Y^{(r_2)}_{\left(\left\lfloor N^{(r_2)}\hat{F}_{Y|X=r_2}(Y_i^{(r_1)}) \right\rfloor+1\right)}, \dots, Y_i^{(r_1)}> Y^{(r_K)}_{\left(\left\lfloor N^{(r_K)}\hat{F}_{Y|X=r_K}(Y_i^{(r_1)}) \right\rfloor+1\right)}\Bigg) \rightarrow_p \frac{1}{N^{(r_1)}}\sum_{i=1}^{N^{(r_1)}}\mathbb{I}\Big(Y> F_{Y|X={r_2}}^{-1}(F_{Y|X={r_1}}(Y_i^{(r_1)})), \dots, Y> F_{Y|X={r_K}}^{-1}(F_{Y|X={r_1}}(Y_i^{(r_1)}))\Big)$ at the rate ${\cal O}_p({{N^{(r_k)}}^{-1/2}})$ as $N^{(k)} \rightarrow \infty$ for each $k=2,\dots,K$.
Then, $\hat{\eta}(r_1)$ converges in probability to $S_{PoB}(r_1)$ at the rate ${\cal O}_p(\sum_{k=1}^K{{N^{(k)}}^{-1/2}})$.
\end{proof}

\noindent{\bf Theorem \ref{theoremb_1}.}
{\it
Under SCM ${\cal M}$ and Assumptions \ref{CON} and \ref{ASEXO2},
for any $\bm{r} \in \mathfrak{R}$, 
$S_{PoR}(\bm{r})$ is bounded by $\sigma_L(\bm{r})\leq S_{PoR}(\bm{r}) \leq \sigma_U(\bm{r})$, where
\begin{align}
&\sigma_L(\bm{r})=\max\left\{\sum_{k=1}^{K-1}\sup_y\{F_{Y|X=r_{k+1}}(y)-F_{Y|X=r_{k}}(y)\}-K+2,0\right\},\\
&\sigma_U(\bm{r})= \min_{k=1,\dots,K-1}\left\{1-\sup_y\{F_{Y|X=r_k}(y)-F_{Y|X=r_{k+1}}(y)\}\right\}.
\end{align}
}

\begin{proof}
Since $S_{PoR}(\bm{r})$ is given as
\begin{equation}
\begin{aligned}
&\mathbb{P}\left(Y_{r_1}> Y_{r_2}> \dots> Y_{r_K}\right)=\mathbb{P}\left((Y_{r_1}> Y_{r_2})\land(Y_{r_2}>Y_{r_3})\land \dots(Y_{r_{K-1}}> Y_{r_K})\right),
\end{aligned}    
\end{equation}
$S_{PoR}(\bm{r})$ is bounded as follows from the Fr\'{e}chet inequalities \citep{Frechet1935,Frechet1960}:
\begin{equation}
\begin{aligned}
&\max\left\{\sum_{k=1}^{K-1}\mathbb{P}(Y_{r_{k}}> Y_{r_{k+1}})-K+2,0\right\}\leq S_{PoR}(\bm{r})\leq \min_{k=1,\dots,K-1}\left\{\mathbb{P}(Y_{r_{k}}> Y_{r_{k+1}})\right\}.
\end{aligned}    
\end{equation}
Then, we have the following results using the bounds of $\psi$ given in Eqs.~\eqref{eq7} and \eqref{eq8}:  
we have
\begin{align}
&\max\left\{\sum_{k=1}^{K-1}\sup_y\{\mathbb{P}(Y_{r_{k+1}}> y)-\mathbb{P}(Y_{r_{k}}> y)\}-K+2,0\right\}\\
&\leq S_{PoR}(\bm{r})\\
&\leq \min_{k=1,\dots,K-1}\left\{1-\sup_y\{\mathbb{P}(Y_{r_{k}}> y)-\mathbb{P}(Y_{r_{k+1}}> y)\}\right\}.
\end{align}
Then, from Assumption \ref{ASEXO2}, we have
\begin{align}
&\max\left\{\sum_{k=1}^{K-1}\sup_y\{F_{Y|X=r_{k+1}}(y)-F_{Y|X=r_{k}}(y)\}-K+2,0\right\}\\
&\leq S_{PoR}(\bm{r})\\
&\leq \min_{k=1,\dots,K-1}\Big\{1-\sup_y\{F_{Y|X=r_k}(y)-F_{Y|X=r_{k+1}}(y)\}\big\}.
\end{align}
\end{proof}

\noindent{\bf Theorem \ref{theoremb_2}.}
{\it
Under SCM ${\cal M}$ and Assumptions \ref{CON} and \ref{ASEXO2},
for any $r_1=1,2,\dots,K$,
$S_{PoB}(r_1)$ is bounded by $\eta_L(r_1)\leq S_{PoB}(r_1) \leq \eta_U(r_1)$, where
\begin{align}
&\eta_L(r_1)=\max\left\{\sum_{k=1}^{K-1}\sup_y\{F_{Y|X=r_{k+1}}(y)-F_{Y|X=r_{1}}(y)\}-K+2,0\right\},\\
&\eta_U(r_1)= \min_{k=1,\dots,K-1}\left\{1-\sup_y\{F_{Y|X=r_1}(y)-F_{Y|X=r_{k+1}}(y)\}\right\}.
\end{align}
}

\begin{proof}
Similarly, since $S_{PoB}(r_1)$ is given as
\begin{align}
\mathbb{P}\left(Y_{r_1}> Y_{r_2},Y_{r_1}> Y_{r_3},\dots, Y_{r_1}> Y_{r_K}\right)=\mathbb{P}\left((Y_{r_1}> Y_{r_2})\land(Y_{r_1}>Y_{r_3})\land \dots(Y_{r_1}> Y_{r_K})\right),
\end{align}
$S_{PoB}(r_1)$ is bounded as follows from the Fr\'{e}chet inequalities:
\begin{equation}
\begin{aligned}
&\max\left\{\sum_{k=1}^{K-1}\mathbb{P}(Y_{r_{1}}> Y_{r_{k+1}})-K+2,0\right\}\leq S_{PoB}(r_1)\leq \min_{k=1,\dots,K-1}\left\{\mathbb{P}(Y_{r_{1}}> Y_{r_{k+1}})\right\}.
\end{aligned}    
\end{equation}
Then, we have the following results: 
\begin{align}
&\max\left\{\sum_{k=1}^{K-1}\sup_y\{\mathbb{P}(Y_{r_{k+1}}> y)-\mathbb{P}(Y_{r_{1}}> y)\},0\right\}\\
&\leq S_{PoB}(r_1)\\
&\leq \min_{k=1,\dots,K-1}\left\{1-\sup_y\{\mathbb{P}(Y_{r_{1}}> y)-\mathbb{P}(Y_{r_{k+1}}> y)\}\right\}.
\end{align}
Then, from Assumption \ref{ASEXO2}, we have
\begin{align}
&\max\left\{\sum_{k=1}^{K-1}\sup_y\{F_{Y|X=r_{k+1}}(y)-F_{Y|X=r_{1}}(y)\}-K+2,0\right\}\\
&\leq S_{PoB}(r_1)\\
&\leq \min_{k=1,\dots,K-1}\left\{1-\sup_y\{F_{Y|X=r_1}(y)-F_{Y|X=r_{k+1}}(y)\}\right\}.
\end{align}
\end{proof}

\noindent{\bf Theorem \ref{theorem4}.}
{\it 
Under SCM ${\cal M}$ and Assumptions \ref{CON} and \ref{ASEXO2}, 
if $\Omega_Y$ is bounded,
$\hat{\sigma}_L(\bm{r})$, $\hat{\sigma}_U(\bm{r})$, $\hat{\eta}_L(r_1)$, and $\hat{\eta}_U(r_1)$ converge in probability to $\sigma_L(\bm{r})$, $\sigma_U(\bm{r})$, $\eta_L(r_1)$, and $\eta_U(r_1)$, respectively, at the rate ${\cal O}_p(1/M+\sum_{k=1}^K{{N^{(k)}}^{-1/2}})$.
}

\begin{proof}
The plugged-in estimates of
\begin{align}
&\sigma_L(\bm{r})=\max\left\{\sum_{k=1}^{K-1}\sup_y\{F_{Y|X=r_{k+1}}(y)-F_{Y|X=r_{k}}(y)\}-K+2,0\right\},\\
&\sigma_U(\bm{r})= \min_{k=1,\dots,K-1}\left\{1-\sup_y\{F_{Y|X=r_k}(y)-F_{Y|X=r_{k+1}}(y)\}\right\},\\
&\eta_L(r_1)=\max\left\{\sum_{k=1}^{K-1}\sup_y\{F_{Y|X=r_{k+1}}(y)-F_{Y|X=r_{1}}(y)\}-K+2,0\right\},\\
&\eta_U(r_1)= \min_{k=1,\dots,K-1}\left\{1-\sup_y\{F_{Y|X=r_1}(y)-F_{Y|X=r_{k+1}}(y)\}\right\}
\end{align}
are
\begin{align}
&\hat{\sigma}_L(\bm{r})=\max\left\{\sum_{k=1}^{K-1}\hat{\sup}_y\{\hat{F}_{Y|X=r_{k+1}}(y)-\hat{F}_{Y|X=r_{k}}(y)\}-K+2,0\right\},\\
&\hat{\sigma}_U(\bm{r})= \min_{k=1,\dots,K-1}\left\{1-\hat{\sup}_y\{\hat{F}_{Y|X=r_k}(y)-\hat{F}_{Y|X=r_{k+1}}(y)\}\right\},\\
&\hat{\eta}_L(r_1)=\max\left\{\sum_{k=1}^{K-1}\hat{\sup}_y\{\hat{F}_{Y|X=r_{k+1}}(y)-\hat{F}_{Y|X=r_{1}}(y)\}-K+2,0\right\},\\
&\hat{\eta}_U(r_1)= \min_{k=1,\dots,K-1}\left\{1-\hat{\sup}_y\{\hat{F}_{Y|X=r_1}(y)-\hat{F}_{Y|X=r_{k+1}}(y)\}\right\}.
\end{align}
Since they are all Lipschitz continuous for each $\hat{F}_{Y|X=r_1}(y)$ ($k=1,\dots,k$) when $\Omega_Y$ is bounded, then they converge in probability at the rate ${\cal O}_p(\sum_{k=1}^K{{N^{(k)}}^{-1/2}})$ to
\begin{align}
&\tilde{\sigma}_L(\bm{r})=\max\left\{\sum_{k=1}^{K-1}\hat{\sup}_y\{F_{Y|X=r_{k+1}}(y)-F_{Y|X=r_{k}}(y)\}-K+2,0\right\},\\
&\tilde{\sigma}_U(\bm{r})= \min_{k=1,\dots,K-1}\left\{1-\hat{\sup}_y\{F_{Y|X=r_k}(y)-F_{Y|X=r_{k+1}}(y)\}\right\},\\
&\tilde{\eta}_L(r_1)=\max\left\{\sum_{k=1}^{K-1}\hat{\sup}_y\{F_{Y|X=r_{k+1}}(y)-F_{Y|X=r_{1}}(y)\}-K+2,0\right\},\\
&\tilde{\eta}_U(r_1)= \min_{k=1,\dots,K-1}\left\{1-\hat{\sup}_y\{F_{Y|X=r_1}(y)-F_{Y|X=r_{k+1}}(y)\}\right\}.
\end{align}
Since $\tilde{\sigma}_L(\bm{r})$, $\tilde{\sigma}_U(\bm{r})$, $\tilde{\eta}_L(r_1)$, and $\tilde{\eta}_U(r_1)$ are Lipschitz continuous w.r.t. $\hat{\sup}_{y}$ functions, 
if $\Omega_Y$ is bounded,
they converge to $\sigma_L(\bm{r})$, $\sigma_U(\bm{r})$, $\eta_L(r_1)$, and $\eta_U(r_1)$  at the rate ${\cal O}(1/M)$.
In total, $\hat{\sigma}_L(\bm{r})$, $\hat{\sigma}_U(\bm{r})$, $\hat{\eta}_L(r_1)$, and $\hat{\eta}_U(r_1)$ converge in probability to $\sigma_L(\bm{r})$, $\sigma_U(\bm{r})$, $\eta_L(r_1)$, and $\eta_U(r_1)$, respectively, at the rate ${\cal O}_p(1/M+\sum_{k=1}^K{{N^{(k)}}^{-1/2}})$.
\end{proof}

\section*{Appendix B: Additional Information about the Numerical Experiment}
\label{appB1}

We present curves of absolute errors of estimates vs. $K$ in Figures~\ref{fig:PoRK} and~\ref{fig:PoBK}.  
The experiments are performed using an Apple M1 (16GB).

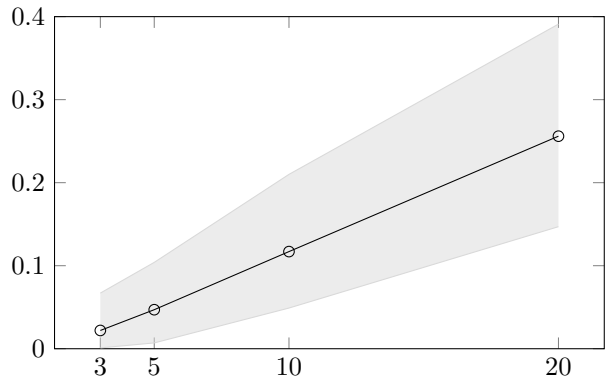
\begin{figure}[H]
\centering
\begin{tikzpicture}
\begin{axis}[
    width=0.5\linewidth,
    height=6cm,
    ymin=0, ymax=0.4,
    xtick={3,5,10,20},
    xticklabels={$3$, $5$, $10$, $20$},
    legend style={font=\small, at={(0.05,0.05)}, anchor=south west}
]

\path [draw=none, fill=gray!30, opacity=0.5] 
(3,0.067) -- (5,0.104) -- (10,0.210) -- (20,0.391) -- 
(20,0.147) -- (10,0.049) -- (5,0.007) -- (3,0.001) -- cycle;

\addplot[name path=mean,
draw=black,
mark=o, 
] coordinates {
    (3, 0.022)
    (5, 0.047)
    (10, 0.117)
    (20, 0.256)
};

\addplot[name path=lower, draw=gray!30] coordinates {
    (3, 0.001)
    (5, 0.007)
    (10, 0.049)
    (20, 0.147)
};

\addplot[name path=upper, draw=gray!30] coordinates {
    (3, 0.067)
    (5, 0.104)
    (10, 0.210)
    (20, 0.391)
};


\end{axis}
\end{tikzpicture}
\caption{Absolute errors of PoR estimates with 95\% CIs (y-axis) vs. $K$ (x-axis). 
}
\label{fig:PoRK}
\end{figure}

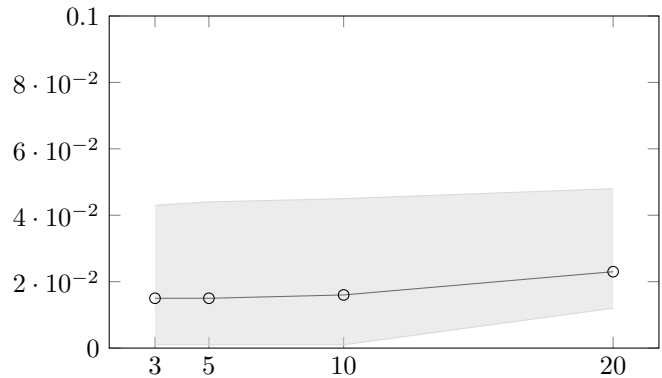
\begin{figure}[H]
\centering
\begin{tikzpicture}
\begin{axis}[
    width=0.5\linewidth,
    height=6cm,
    ymin=0, ymax=0.1,
    xtick={3,5,10,20},
    xticklabels={$3$, $5$, $10$, $20$},
    error bars/y dir=both,
    error bars/y explicit,
    legend style={draw=none, font=\small, at={(0.05,0.05)}, anchor=south west}
]
\addplot[
    black,
    mark=o, 
] coordinates {
    (3, 0.015)
    (5, 0.015)
    (10, 0.016)
    (20, 0.023)
};

\addplot[name path=lower, draw=gray!30] coordinates {
    (3, 0.001)
    (5, 0.001)
    (10, 0.001)
    (20, 0.012)
};

\addplot[name path=upper, draw=gray!30] coordinates {
    (3, 0.043)
    (5, 0.044)
    (10, 0.045)
    (20, 0.048)
};

\path [draw=none, fill=gray!30, opacity=0.5] 
(3,0.043) -- (5,0.044) -- (10,0.045) -- (20,0.048) -- 
(20,0.012) -- (10,0.001) -- (5,0.001) -- (3,0.001) -- cycle;

\addplot[dashed, thick] coordinates {(3, 0.666) (20, 0.666)};
\end{axis}
\end{tikzpicture}
\caption{Absolute errors of PoB estimates with 95\% CIs (y-axis) vs. $K$ (x-axis). 
}
\label{fig:PoBK}
\end{figure}

\section*{Appendix C: Additional Information about the Application to Real-world}
\label{appB}

We provide additional information about the application in the body of the paper.
Figure \ref{tab:addapp1} shows the mean estimates with 95\% CIs of the upper bounds (UB) and the lower bound (LB) of  PoR or PoB metrics.

\begin{table}[H]
    \centering
    \scalebox{0.95}{
    \begin{tabular}{c|c}
    \hline
    Bounds of  PoR or PoB &  Estimates\\
        \hline\hline
      \hline
     UB of $S_{PoR}(B,H,S)$  & 0.582 [0.333,0.788] \\
     LB of $S_{PoR}(B,H,S)$  & 0.000 [0.000,0.000] \\
     UB of $S_{PoR}(B,S,H)$  & 0.541 [0.257,0.788] \\
     LB of $S_{PoR}(B,S,H)$  & 0.000 [0.000,0.000] \\
     UB of $S_{PoR}(H,B,S)$  & 0.564 [0.250,0.833] \\
     LB of $S_{PoR}(H,B,S)$  & 0.000 [0.000,0.000] \\
     UB of $S_{PoR}(H,S,B)$  & 0.719 [0.417,0.917] \\
     LB of $S_{PoR}(H,S,B)$  & 0.033 [0.000,0.303] \\
     UB of $S_{PoR}(S,B,H)$  & 0.618 [0.333,0.833] \\
     LB of $S_{PoR}(S,B,H)$  & 0.000 [0.000,0.000] \\
     UB of $S_{PoR}(S,H,B)$  & 0.742 [0.431,0.916] \\
     LB of $S_{PoR}(S,H,B)$  & 0.038 [0.000,0.333] \\
     \hline
     UB of $S_{PoB}(B)$  & 0.618 [0.272,0.909]\\
     LB of $S_{PoB}(B)$  & 0.000 [0.000,0.000]\\
     UB of $S_{PoB}(H)$  & 0.832 [0.583,1.000]\\
     LB of $S_{PoB}(H)$  & 0.000 [0.000,0.000]\\
     UB of $S_{PoB}(S)$  & 0.616 [0.333,0.917]\\
     LB of $S_{PoB}(S)$  & 0.000 [0.000,0.000]\\
      \hline
    \end{tabular}
    }
    \caption{Mean estimates with 95\% CIs of the upper bounds (UB) and the lower bound (LB) of  PoR or PoB metrics.
    }
    \label{tab:addapp1}
\end{table}

\end{document}